\renewcommand{\thefootnote}{\fnsymbol{footnote}}
\newcommand\nc\newcommand
\nc\bfa{{\boldsymbol a}}\nc\bfA{{\boldsymbol A}}\nc\cA{{\mathcal A}}
\nc\bfb{{\boldsymbol b}}\nc\bfB{{\boldsymbol B}}\nc\cB{{\mathcal B}}
\nc\bfc{{\boldsymbol c}}\nc\bfC{{\boldsymbol C}}\nc\cC{{\mathcal C}}
\nc\sC{{\mathscr C}}
\nc\bfd{{\boldsymbol d}}\nc\bfD{{\boldsymbol D}}\nc\cD{{\mathcal D}}
\nc\bfe{{\boldsymbol e}}\nc\bfE{{\boldsymbol E}}\nc\cE{{\mathcal E}}
\nc\bff{{\boldsymbol f}}\nc\bfF{{\boldsymbol F}}\nc\cF{{\mathcal F}}
\nc\bfg{{\boldsymbol g}}\nc\bfG{{\boldsymbol G}}\nc\cG{{\mathcal G}}
\nc\bfh{{\boldsymbol h}}\nc\bfH{{\boldsymbol H}}\nc\cH{{\mathcal H}}
\nc\bfi{{\boldsymbol i}}\nc\bfI{{\boldsymbol I}}\nc\cI{{\mathcal I}}
\nc\bfj{{\boldsymbol j}}\nc\bfJ{{\boldsymbol J}}\nc\cJ{{\mathcal J}}
\nc\bfk{{\boldsymbol k}}\nc\bfK{{\boldsymbol K}}\nc\cK{{\mathcal K}}
\nc\bfl{{\boldsymbol l}}\nc\bfL{{\boldsymbol L}}\nc\cL{{\mathcal L}}
\nc\bfm{{\boldsymbol m}}\nc\bfM{{\boldsymbol M}}\nc\sM{{\mathscr M}}\nc\cM{{\mathcal M}}
\nc\bfn{{\boldsymbol n}}\nc\bfN{{\boldsymbol N}}\nc\cN{{\mathcal N}}
\nc\bfo{{\boldsymbol o}}\nc\bfO{{\boldsymbol O}}\nc\cO{{\mathcal O}}
\nc\bfp{{\boldsymbol p}}\nc\bfP{{\boldsymbol P}}\nc\cP{{\mathcal P}}
\nc\bfq{{\boldsymbol q}}\nc\bfQ{{\boldsymbol Q}}\nc\cQ{{\mathcal Q}}
\nc\bfr{{\boldsymbol r}}\nc\bfR{{\boldsymbol R}}\nc\cR{{\mathcal R}}
\nc\bfs{{\boldsymbol s}}\nc\bfS{{\boldsymbol S}}\nc\cS{{\mathcal S}}
\nc\bft{{\boldsymbol t}}\nc\bfT{{\boldsymbol T}}\nc\cT{{\mathcal T}}
\nc\bfu{{\boldsymbol u}}\nc\bfU{{\boldsymbol U}}\nc\cU{{\mathcal U}}
\nc\bfv{{\boldsymbol v}}\nc\bfV{{\boldsymbol V}}\nc\cV{{\mathcal V}}
\nc\bfw{{\boldsymbol w}}\nc\bfW{{\boldsymbol W}}\nc\cW{{\mathcal W}}
\nc\bfx{{\boldsymbol x}}\nc\bfX{{\boldsymbol X}}\nc\cX{{\mathcal X}}
\nc\bfy{{\boldsymbol y}}\nc\bfY{{\boldsymbol Y}}\nc\cY{{\mathcal Y}}
\nc\bfz{{\boldsymbol z}}\nc\bfZ{{\boldsymbol Z}}\nc\cZ{{\mathcal Z}}
\nc\diff{{\mathrm d}}
\nc\e{{\mathrm e}}
\nc\calC{{\mathcal C}}
\newcommand{\remove}[1]{}
\newcommand{\avg}{{\mathbb E}}
\newtheorem{conjecture}{Conjecture}
\newtheorem*{lemma*}{Lemma}
\newtheorem{theorem}{Theorem}
\newtheorem{lemma}{Lemma}
\theoremstyle{definition}
\newtheorem*{definition}{Definition}
\newtheorem{remark}{Remark}
\def\DEBUG{true}
  \def\rem#1{{\marginpar{\raggedright\scriptsize #1}}}
  \newcommand{\barnr}[1]{\rem{\textcolor{red}{$\bullet$ #1}}}
  \newcommand{\aryar}[1]{\rem{\textcolor{green}{$\bullet$ #1}}}
  \newcommand{\barnr}[1]{}
  \newcommand{\aryar}[1]{}
\begin{document}
%
\title{Semisupervised Clustering by Queries and Locally Encodable Source Coding}
%
%
%


\author{Arya Mazumdar~\IEEEmembership{Senior Member,~IEEE}, 
\and 
Soumyabrata~Pal}

\maketitle
{\renewcommand{\thefootnote}{}\footnotetext{

 

College of Information and Computer Sciences, University of Massachusetts, Amherst. 
\texttt{\{arya,spal\}@cs.umass.edu}. This work was supported by NSF  Awards CCF-1909046,  CCF-1642658, and CCF-1934846. Some of the results of this paper have appeared in the proceedings of the 2017 Conference on Neural Information Processing Systems.
}
\renewcommand{\thefootnote}{\arabic{footnote}}
\setcounter{footnote}{0}

\begin{abstract}
Source coding is the canonical problem of data compression in information theory. In a {\em locally encodable} source coding, each compressed bit depends on only few bits of the input. In this paper, we show that a recently popular model of semisupervised clustering is equivalent to locally encodable source coding. In this model, the task is to perform multiclass labeling of unlabeled elements. At the beginning, we can ask in parallel a set of simple queries to an oracle who provides (possibly erroneous) binary answers  to the queries. The queries cannot involve more than two (or a fixed constant number of) elements. Now the labeling of all the elements (or clustering) must be performed based on the noisy query answers. The goal is to recover all the correct labelings while minimizing the number of such queries. 
The equivalence to locally encodable source codes leads us to find  lower bounds on the number of queries required in variety of scenarios. We provide querying schemes based on pairwise `same cluster' queries - and  pairwise AND queries, and show provable performance guarantees for each of the schemes.
\end{abstract}

\begin{IEEEkeywords}
Local encoding, source coding, data compression, semi-supervised clustering, `same-cluster' queries. 
\end{IEEEkeywords}

%

\section{Introduction}
\label{sec:Introduction}
Suppose we have $n$ elements, and the $i$th element has a label $X_i \in \{0,1, \dots, k-1\}, \forall i \in \{1,\dots,n\}$. We consider the task of recovering
the labels of the elements (or learning the label vector). This can also be thought of as a clustering problem of $n$ elements into $k$ clusters, where there is a ground-truth
clustering\footnote{The difference between clustering and learning labels is that in the case of clustering it is not necessary to know the value of the label for a cluster. Therefore any unsupervised labeling algorithm will be a clustering algorithm, however the reverse is not true. In this paper we are mostly concerned about the labeling problem, hence our algorithms (upper bounds) are valid for clustering as well.}.
There exist various approaches to this problem in general. In many cases some similarity values between pair of elements are known (a high similarity value indicate that they are in the same cluster). Given these similarity values (or a weighted complete graph), the task is equivalent to graph clustering; when perfect  similarity values are known this is equivalent to finding the connected components of a graph.

A recent approach to clustering, or getting labeled data, has been via crowdsourcing. Suppose there is an oracle (expert labelers, crowd workers) with whom we can make pairwise queries of the form ``do elements $u$ and
$v$ belong to the same cluster?''. We will call this the `same cluster' query (as per \cite{ashtiani2016clustering}).  Based on the answers from the oracle, we then try to reconstruct the labeling or clustering. This idea has seen a recent surge of interest especially in the entity resolution research (see, for e.g. \cite{wang2012crowder,vesdapunt2014crowdsourcing,fss:16,aaai:17}). Since each query to crowd workers cost time and money, a natural objective will be to minimize the number of
queries to the oracle and still recover the clusters exactly. Carefully designed adaptive and  interactive querying algorithms for clustering has also recently been developed \cite{wang2012crowder,vesdapunt2014crowdsourcing,fss:16,mazumdar2017query,mazumdar2017clustering}. In particular, the query complexity for clustering with a $k$-means objective had recently been studied in \cite{ashtiani2016clustering}, and there are significant works in designing optimal crowdsourcing schemes in general (see,  \cite{karger2011iterative,karger2014budget,vempaty2014reliable,zhou2012learning,liu2012variational}). 
Note that, a crowd worker may potentially handle more than two elements at a time; however it is of interest to keep the number of elements involved in a query as small as possible. As an example, recent work in \cite{vinayak2016crowdsourced} considers triangle queries (involving three elements in a query). Also crowd workers can compute some simple functions on this small set of inputs - instead of answering a `same cluster' query. But again it is desirable that the answer the workers provide to be simple, such as a yes/no answer, and also easily computable.

The queries to the oracle can be asked adaptively or non-adaptively.  For the clustering problem, both the adaptive version and the nonadaptive versions have been studied. 
In the above model, $nk$ adaptive `same-cluster' queries are sufficient for exact recovery of all the clusters~\cite{mazumdar2017query}. Whereas, for the non-adaptive case, as will be discussed later, $\Omega(n^2)$ queries are necessary for $k>2$~\cite{mazumdar2017clustering}. This means in absence of any other information, adaptive querying can perform much better than the non-adaptive version. A discussion on how to close this adaptivity gap in query complexity with other assumptions can be found in \cite{mazumdar2017clustering}.
While the adaptive version has this obvious advantage, for crowdsourcing applications it is helpful to have a parallelizable querying scheme in most scenarios for faster response-rate and real time analysis. Indeed, in crowdsourcing, it may take a substantial amount of time to obtain responses from actual crowd-workers and therefore an adaptive algorithm further faces the issue of time as a bottleneck.  In this paper, we concentrate on the nonadaptive version of the problem, i.e., we perform the labeling algorithm after all the query answers are all obtained. 

Budgeted crowdsourcing problems can be quite straight-forwardly viewed as a canonical source-coding or source-channel coding problem of information theory (e.g., see the recent paper \cite{lahouti2016fundamental}). 
A main contribution of our paper is to view this as a {\em locally encodable} source coding problem: a data compression problem where each compressed bit depends only on a constant number of input bits.
The notion of locally encodable source coding is not well-studied even within information theory community, and the only place where it is mentioned to the best of our knowledge is in \cite{montanari2008smooth}, although the focus of that paper is a related notion of {\em smooth} encoding. Another related notion of {\em local decoding} seems to be much more well-studied  \cite{mazumdar2015local,mazumdar2014update,makhdoumi2015locally,patrascu2008succincter,chandar2010sparse,pananjady2015compressing,buhrman2002bitvectors,viola2012bit}.

By posing the querying problem as such we can 
get a number of information theoretic lower bounds on the number of queries required to recover the correct labeling. We also provide 
nonadaptive schemes that are near optimal. Another of our contributions is to show that even within queries with binary answers, `same-cluster' queries (or XOR queries) may not be the best possible choice. A smaller number of queries can be achieved for approximate recovery by using what we call an AND query. Among our settings, we also consider the case when the oracle gives incorrect answers with some probability. 
A simple scheme to reduce errors in this case could be to take a majority vote after asking the same question to multiple different crowd workers. However, often that is not sufficient. Experiments on several real datasets (see \cite{mazumdar2017clustering}) with answers collected from Amazon Mechanical Turk \cite{DBLP:journals/corr/GruenheidNKGK15,DBLP:conf/icde/VerroiosG15} show that majority voting could  even increase the errors.
Interestingly, such an observation has been made by a recent paper as well \cite[Figure 1]{prelec2017solution}. This prompts different, and more involved aggregating schemes than majority voting that are difficult to theoretically analyze. The probability of error of a query answer may also be thought of as the aggregated answer after repeating the query several times. Once the answer has been aggregated, it cannot change -- and thus it reduces to the model where repeating the same question multiple times is not allowed. On the other hand, it is usually assumed that the answers to different queries are independently erroneous (see \cite{gruenheid2015fault}). Therefore we consider the case where repetition of a same query multiple times is not allowed\footnote{Independent repetition of queries is also theoretically not  interesting, as by repeating any query just $O(\log n)$ times one can reduce the probability of error to near zero.}, however different queries can result in erroneous answers independently. 


In this case, the best known polynomial-time algorithms need $O(nk^2 \log n)$ queries to perform the clustering with $k$ clusters correctly with high probability~\cite{mazumdar2017clustering}. We extend this result to show that if we are allowed to tolerate $\delta$ proportion of elements being wrongly clustered, then there exists a polynomial time querying scheme with $O(nk^2 \log\frac{k}{\delta})$ queries.  We also show that by employing a generalized version of AND querying method, $(1-\delta)$-proportion of all labels in the label vector can be efficiently recovered with only $O(nk \log \frac{k}\delta)$ queries. In this generalized AND query we ask queries of the form: `what is the common label of these elements?'.

Along the way, we also provide new information theoretic results on fundamental limits of locally encodable source coding. While the the related notion of locally decodable source code \cite{mazumdar2015local,makhdoumi2015locally,patrascu2008succincter,chandar2010sparse}, as well as smooth compression \cite{montanari2008smooth,patrascu2008succincter} have been studied, there was no nontrivial result known related to locally encodable codes in general.    
Although the focus of this paper is primarily theoretical, we also perform a small but real crowdsourcing experiment to validate our algorithm. 

A preliminary version of this paper presented in a conference~\cite{mazumdar2017semisupervised} contains an error. Namely Theorem 1 therein was wrong, as pointed out by~\cite{pang2019coding}. The correct statement in  the relevant regime is provided in Section~\ref{subsec:Perfect+Full}.
\section{Problem Setup and Information Theoretic View}
\label{sec:Setup}
For $n$ elements, consider a label vector $\bfX \in \{0, \dots, k-1\}^n$, where $X_i$, the $i$th entry of $\bfX$, is the label of the $i$th element and can take one of $k$ possible values. Suppose $P(X_i =j) = p_j \forall j$ and $X_i$'s are independent.  In other words, the prior distribution of the labels is given by the vector $\bfp \equiv (p_0, \dots, p_{k-1})$. 
  For the special case of $k=2$, we denote $p_0 \equiv 1-p$ and $p_1 \equiv p$.
While we emphasize on the case of $k=2$ our results extends in the case of larger $k$, as will be mentioned. 

A query $Q: \{0, \dots, k-1\}^\Delta \to \{0,1\}$ is a deterministic function that takes as argument $\Delta$ labels, $\Delta\ll n$, and 
outputs a binary answer. While the query answer need not be binary, we restrict ourselves mostly to this case for being a practical choice. 

Suppose a total of $m$ parallel i.e., {\em nonadaptive}, queries are made and the query answers are given by $\bfY \in \{0,1\}^m$. The objective is to reconstruct the label vector $\bfX$ from $\bfY$, such that the number of queries $m$ is minimized.

We assume our recovery algorithms to have the knowledge of $\bfp$. This prior distribution, or the relative sizes of clusters, is usually easy to estimate by subsampling a small ($O(\log n)$) subset of elements and performing a complete clustering within that set (by, say, all pairwise `same-cluster' queries). In many prior works, especially in the recovery algorithms of popular statistical models such as stochastic block model,  it is assumed that the relative sizes of the clusters are known (see \cite{abh:16}).

 
We also consider the setting where query answers may be erroneous with  some probability of error. For crowdsourcing applications, this is a valid assumption since many times even expert labelers can make errors, and such assumption can be made. To model this we assume each entry of $\bfY$ is flipped independently with some probability $q$. Such independence assumption has been used many times previously to model 
errors in crowdsourcing systems (see, e.g., \cite{gruenheid2015fault}). While this may not be the perfect model, we {\em do not allow a single query to be repeated multiple times in our algorithms} (see the Introduction for a justification).  
For the analysis of our algorithm we just need to assume that the answers to different queries are independent.
While we analyze our algorithms under these assumptions for theoretical guarantees, it turns out that even in real crowdsourcing situations our algorithmic results mostly follow the theoretical  
results, giving further validation to the model.

For the $k=2$ case, and when $q=0$ (perfect oracle), it is easy to see that $n$ nonadaptive `same-cluster' queries are sufficient for the task. One simply compares every element with the  first element. This does not extend to the case when $k>2$: for perfect recovery, and without any prior, one must make  $\Omega(n^2)$ nonadaptive pairwise   queries in this case (Claim 4 in~\cite{mazumdar2017clustering}).  When $q>0$ (erroneous oracle), it has been shown that a total number of  $O(\gamma nk \log n)$ nonadaptive queries are sufficient \cite{mazumdar2017clustering}, where $\gamma$ is the ratio of the sizes of the largest and smallest clusters, albeit with an inefficient algorithm.  

{\bf Note that, all the statements regarding number of queries in this paper refers to nonadaptive schemes.} So we will omit that qualifier going forward.


\paragraph{Information theoretic view.} 
The problem of learning a label vector $\bfX$ from queries is very similar to the canonical source coding (data compression) problem from information theory. 
In the source coding problem, a (possibly random) vector $\bfX$ is `encoded' into a usually smaller length binary vector called the {\em compressed vector}\footnote{The compressed vector is not necessarily binary, nor it is necessarily smaller length.} $\bfY \in \{0,1\}^m$. The decoding task is to again obtain $\bfX$ from the compressed vector $\bfY$. It is known that if each entry of $\bfX$ is independently distributed according to $\bfp$, then $m \approx n H(\bfp)$ is both necessary and sufficient to recover $\bfx$ with high probability, where $H(\bfp) = - \sum_ip_i \log p_i$ is the entropy of probability vector $\bfp$. 

We can cast our problem in this setting naturally, where entries of $\bfY$ are just answers to queries made on $\bfX$. The main difference is that in source coding each $Y_i$ may potentially depend on all the entries of $\bfX$; while in the case of label learning, each $Y_i$ may depend on only $\Delta$ of the $X_i$s.  

We call this {\em locally encodable source coding}. This terminology is analogous to the recently developed literature on locally decodable source coding \cite{mazumdar2015local,makhdoumi2015locally}. It is called locally encodable, because each compressed bit depend only on $\Delta$ of the source (input) bits. For locally decodable source coding, each bit of the reconstructed sequence $\hat{\bfX}$ depends on at most a prescribed constant number $\Delta$ of bits from the compressed sequence. Another closely related notion is that of `smooth compression', where each source bit contributes to at most $\Delta$ compressed bits \cite{montanari2008smooth}. Indeed, in \cite{montanari2008smooth}, the notion of  locally encodable source coding is also present where it was called robust compression. 
We provide new information theoretic  lower bounds on the number of queries required to reconstruct $\bfX$ exactly and approximately for our problem.

For the case when there are only two labels, the `same-cluster' query is equivalent to a Boolean XOR operation between the labels. There are some inherent limitations to these functions that prohibit the `same-cluster' queries to achieve the best possible number of queries for the `approximate' recovery of labeling problem. We use an old result by Massey \cite{massey1977joint} to establish this limitation.  We show that, instead using an operation like Boolean AND, much smaller number of queries are able to recover most of the labels correctly.


We also consider the case when the oracle gives faulty answer, or $\bfY$ is corrupted by some noise (the {\em binary symmetric channel}). This setting is analogous to the problem of  {\em joint source-channel coding}. However, just like before, each encoded bit must depend on at most $\Delta$ bits.  In a real crowdsourcing experiment, we have seen that if crowd-workers have been provided with the same set of pairs and being asked for `same cluster' queries as well as AND queries, the error-rate of AND queries is lower, when there are two groundtruth clusters. The reason is that for a correct `no' answer in an AND query, a worker just needs to know one of the labels in the pair. For a `same cluster' query, both the labels must be known to the worker for any correct answer. We show that for the approximate recovery problem, AND queries are again performing substantially well, and provide theoretical guarantees for both AND query and `same cluster' query based schemes when two or more groundtruth clusters are present.

There are multiple reasons why one would ask for a `combination' or function of multiple labels from a worker instead of just asking for a label itself (a `label-query'). Information theoretically, a `label-query' requires more information (up to $\log k$ bits) from the workers compared to `same-cluster' queries (at most 1 bit). Further, a crowd worker can answer to `same-cluster' queries without being aware of the complete label set. 
While these two advantages are not present in the generalized AND queries we describe, 
 in case of erroneous answer with AND queries (or `same cluster' queries), we have the option of not repeating a query, and still reduce errors. No such option is available with direct label-queries. 
 Furthermore, as we will  subsequently see,  one needs less number of AND queries than both direct label queries and `same-cluster' queries, for approximate recovery in certain regime.

\begin{remark}\label{rem:cl}
Note that, using only `same-cluster' queries  at best the complete clustering can be recovered, and it is not possible to recover the labeling unless some other information is also available. Indeed, if the prior $\bfp = (p_0, \dots, p_{k-1})$ is known, and $p_i \ne p_j \forall i,j$, then by looking at the complete clustering it is possible to figure out the labeling (or to correctly assign the clusters their respective labels, since the label vector must belong to the `typical set' of $\bfp$. This implies, for the case of two clusters, if $p \ne \frac12$ then the labeling can be inferred with high probability from the clustering.
\end{remark}

\paragraph{Contributions.}
In summary our contributions can be listed as follows.

\noindent 1. Noiseless queries and exact recovery (Sec.~\ref{subsec:Perfect+Full}): For two labels/clusters, we provide a querying scheme that asks $n - \Theta(n/ \log n)$ nonadaptive pairwise `same cluster' queries, and recovers the  labels with high probability, for a range of prior probabilities. We also show that, this result is order-wise optimal.  If instead we involve $\Delta \ge 3$ elements in each of the queries, then with 
$\alpha n, \alpha<1$ number of nonadaptive XOR queries we recover all labels with high probability, for a range of prior probabilities. We also provide a new lower bound that is strictly better than $n H(\bfp)$ for some $\bfp$.

\noindent 2. Noiseless queries and approximate recovery (Sec.~\ref{subsec:Perfect+Approximate}): We provide a new lower bound on the number of queries required to recover $(1-\delta)$ fraction of the labels $\delta >0$. We also show that `same cluster' queries are insufficient for certain regime of $\delta$, and propose a new querying strategy based on AND operation that performs substantially better.

\noindent  3. Noisy queries and approximate recovery (Sec.~\ref{subsec:Faulty+Approximate}). For this part we assumed the query answer to be $k$-ary ($k \ge 2$) where $k$ is the number of clusters. This section contains two main algorithms that use the `same-cluster' queries and AND queries as main primitives repectively. We show that, even in the presence of noise in the query answers, it is possible to recover $(1-\delta)$ proportion of all labels correctly with only $O(nk^2 \log \frac{k}{\delta})$ nonadaptive `same-cluster' queries and $O(nk \log \frac{k}{\delta})$ AND queries. We validate this theoretical result in a  crowdsourcing experiment in Sec.~\ref{sec:Experiments}.

\section{Main results and techniques}
\subsection{Noiseless queries and exact recovery}
\label{subsec:Perfect+Full}
In this scenario we assume the query answer from the oracle to be perfect and we wish to get back all of the original labels exactly without any error. Each query is allowed to take only $\Delta$ labels as input. When $\Delta=2$, we are allowed to ask only pairwise queries. Let us consider the case when there are only two labels, i.e., $k=2$.
 That means the labels  $X_i \in \{0,1\}, 1\le i\le n,$ are iid Bernoulli($p$) random variable.  Therefore the number of queries $m$ that are necessary and sufficient to recover all the labels with high probability  is approximately $n H(p) \pm o(n)$ where for a scalar $x$, we define $H(x)\equiv -x \log_2 x -(1-x) \log_2 (1-x)$ to be the binary entropy function. However the sufficiency part here does not take into account that each query can involve only $\Delta$ labels. 
\subsubsection{Same cluster queries ($\Delta=2$)}
We warm up with the case of only two labels (clusters), and the same cluster queries.
For two labels, a same cluster query simply amounts to being the modulo 2 sum of the two label values. It is easy to see that querying the first label ($X_1$) with every other label allows us to infer the clustering of the labels since we can simply group the labels which are same as the first label and the labels which are different to the first label separately. 
As mentioned in Remark \ref{rem:cl}, if $p \ne \frac12$, then just by counting the sizes of the groups, it is possible to get the labels correctly as well (among the two possible labelings, given the clustering).
The query complexity in this case is $n-1$ but we can have the following scheme that uses less than $n-1$ queries by building on the aforementioned idea. 

\paragraph{Querying scheme.} Our scheme works in the following manner. First, we partition the $n$ elements into $d$ disjoint and equal sized groups each containing $\frac{n}{d}$ elements (assume $d | n$). For each group, we query the first element of the group with every other element of the group. Now, for each group we can cluster the labels and we will identify
\begin{itemize}
\item the smaller cluster with the label \texttt{1} and the larger cluster with the label \texttt{0}, when $p <\frac12$
\item the smaller cluster with the label \texttt{0} and the larger cluster with the label \texttt{1}, when $p > \frac12$.
\end{itemize}
Going forward, let us just assume $p <\frac12$ without any loss of generality.
Finally we can aggregate all the elements identified with the label \texttt{1}  and with label \texttt{0} and return them as our final output. The total number of queries required in this scheme is $d(\frac{n}{d}-1) = n-d$.

\begin{theorem}\label{thm:-1}
For the querying scheme described above, $n-\frac{nD(\frac{1}{2}||p)}{2\log n}$ `same cluster' queries are sufficient to recover the label vector $\bfX$ with probability at least $1-\frac{D(\frac{1}{2}||p)}{2n\log n}$ where $D(p||q)\equiv p\ln \frac{p}{q}+(1-p)\ln \frac{1-p}{1-q}$ is the Kullback-Leibler Divergence between two Bernoulli distributed random variables with parameters $p$ and $q$ respectively.
\end{theorem}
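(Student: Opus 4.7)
The plan is to analyze the scheme group by group, bound the per-group error probability by the standard Chernoff/Sanov tail for a binomial, and then close the proof with a union bound over the $d$ groups. Since the first element of each group is queried against every other element of the group, from the $\frac{n}{d}-1$ answers we perfectly recover the partition of the group into the two sub-clusters ``same label as the first element'' and ``different label from the first element.'' Our labelling rule (under $p<1/2$) then assigns \texttt{1} to the smaller of the two sub-clusters and \texttt{0} to the larger. The total number of queries is immediately $d(\frac{n}{d}-1) = n-d$, and choosing $d = \frac{n D(1/2\|p)}{2\log n}$ will yield the stated query count; it remains to control the probability of a labelling error.

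Next I would identify the per-group error event. Inside a fixed group of size $s = n/d$, the labels are i.i.d.\ Bernoulli$(p)$, and the group is mislabelled if and only if the minority label \texttt{1} fails to be strictly the minority inside this particular group, i.e., if the number of \texttt{1}'s among the $s$ elements is at least $s/2$. (Note that the degenerate case in which the group contains only one sub-cluster, \emph{all} labels equal to \texttt{1}, is subsumed by this event, so no separate case analysis is needed.) Writing $B_s \sim \text{Bin}(s,p)$, the per-group error probability is $\Pr(B_s \ge s/2)$, which by the standard Chernoff bound for binomial upper tails satisfies
\[
\Pr\bigl(B_s \ge s/2\bigr) \;\le\; \exp\bigl(-s\,D(\tfrac12\|p)\bigr).
\]
This is the only nontrivial analytical input, and it is entirely routine.

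I would then apply a union bound over the $d$ independent groups to conclude that the probability that \emph{any} group is mislabelled is at most $d \exp(-s D(\tfrac12\|p))$. Plugging in $s = n/d = \frac{2\log n}{D(1/2\|p)}$ makes the exponential factor equal to $\exp(-2\log n) = n^{-2}$ (interpreting $\log$ as $\ln$ to match the definition of $D$ in the theorem), so the union bound becomes
\[
\Pr(\text{any group mislabelled}) \;\le\; d \cdot n^{-2} \;=\; \frac{n D(\tfrac12\|p)}{2\log n}\cdot n^{-2} \;=\; \frac{D(\tfrac12\|p)}{2 n \log n},
\]
which matches the claimed bound. The correct-labelling event for the whole vector is exactly the intersection of ``no group mislabelled,'' so this also gives the stated success probability.

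The only step that warrants any care is the binomial tail estimate, and even there the challenge is minor: one must be careful that the Chernoff exponent is $D(1/2\|p)$ (not some weaker quantity), handle the possible tie at $B_s = s/2$ by keeping the inequality non-strict on the error side, and use consistent logarithm bases between the theorem's definition of $D$ and the $\log n$ appearing in $s$. Once these are set straight, the rest is bookkeeping: the scheme's query count is deterministic and the probability bound falls out immediately from the union bound.
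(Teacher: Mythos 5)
Your proposal is correct and follows essentially the same route as the paper: per-group error is the event that label \texttt{1} is not the strict minority in the group, bounded by the Chernoff/KL tail $\exp(-s\,D(\tfrac12\|p))$, followed by a union bound over the $d$ groups with $d=\frac{nD(1/2\|p)}{2\log n}$. Your treatment is marginally more careful than the paper's about ties and the logarithm-base bookkeeping, but the argument is the same.
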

\begin{proof}
Let us set $d =\frac{nD(\frac{1}{2}||p)}{2\log n}.$ We omit the use of ceilings and floor to maintain clarity.  Within a group of elements, we are able to obtain the complete clustering. We will fail to obtain the labeling only if the larger cluster has the true label \texttt{1}. However this happens with probability at most $2^{-\frac{nD(\frac{1}{2}||p)}{d}}$ \cite{cover2012elements}. Since there are $d$ groups, the probability that we fail to recover the labels in any one of the groups is at most $d2^{-\frac{nD(\frac{1}{2}||p)}{d}} = \frac{d}{n^2} = \frac{D(\frac{1}{2}||p)}{2n\log n}$. The total number of queries is $n -d = n-\frac{nD(\frac{1}{2}||p)}{2\log n}$.
%
\end{proof}
Now, we will show a matching lower bound that proves that the reduction on query complexity presented in the scheme described above to be tight up to constant factors. 
In particular, we prove the following theorem.
\begin{theorem}\label{thm:d2}
Consider the binary labeling problem with pairwise queries. If the number of `same cluster' queries is less than $n-\frac{(2+\epsilon)nD(\frac{1}{2}||p)}{\log n}$ for any positive constant $\epsilon>0$, then 
any querying scheme will fail to recover the labels of all elements with positive probability.
\end{theorem}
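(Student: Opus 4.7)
The plan is to model each `same cluster' query as an edge of a \emph{query graph} $G$ on vertex set $\{1,\dots,n\}$; the answer to a query $\{u,v\}$ reveals $X_u \oplus X_v$. These answers determine $\bfX$ exactly within each connected component of $G$ up to a global flip of that component. Hence any decoder---and the optimal MAP decoder in particular---is equivalent to choosing, independently in each connected component, one of the two labelings consistent with the observed XOR values. Since $G$ has at most $m$ edges on $n$ vertices, its number of connected components satisfies $c(G) \geq n - m$.

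Suppose therefore that $m \leq n - c^*$ where $c^* := \frac{(2+\epsilon)\, n\, D(\tfrac{1}{2}\|p)}{\log n}$, so that $G$ has at least $c^*$ components. A simple averaging argument shows that at least $c^*/2$ of these components have size at most $s^* := \frac{2n}{c^*} = \frac{2\log n}{(2+\epsilon)D(\tfrac{1}{2}\|p)}$; I call these the \emph{small} components.

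I would then lower-bound the probability that MAP fails inside a single small component. Assume $p < \tfrac{1}{2}$ without loss of generality. In a component of size $s$ containing $b$ ones, MAP picks the labeling with at most $s/2$ ones and hence errs precisely when $b \geq s/2$. Since $b \sim \mathrm{Binomial}(s,p)$, a Stirling estimate of the central binomial coefficient (equivalently, the method of types) yields the \emph{anti}concentration bound
\begin{equation*}
\Pr\bigl[\mathrm{Binomial}(s,p) \geq s/2\bigr] \;\geq\; \tfrac{1}{s+1}\, e^{-s\, D(\tfrac{1}{2}\|p)}.
\end{equation*}
Plugging in $s \leq s^*$ gives a per-component failure probability of at least $\Omega\!\bigl(n^{-2/(2+\epsilon)}/\log n\bigr)$.

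Finally, because the connected components are vertex-disjoint and the $X_i$ are i.i.d., the per-component MAP failure events are mutually independent. Writing $p_i$ for the MAP failure probability in component $i$,
\begin{equation*}
\Pr[\text{exact recovery}] \;\leq\; \prod_i (1-p_i) \;\leq\; \exp\!\Bigl(-\!\sum_i p_i\Bigr),
\end{equation*}
and the sum over the $c^*/2$ small components is at least $\Omega\!\bigl(n^{\epsilon/(2+\epsilon)}/\log^2 n\bigr)$, which tends to infinity with $n$. Therefore recovery fails with probability tending to $1$, and in particular with positive probability, as required. The main obstacle is the per-component anticoncentration step: unlike a typical Chernoff argument, here one needs a matching \emph{lower} bound on the binomial tail, which is where the method-of-types estimate is essential; everything else is bookkeeping about the component structure of $G$.
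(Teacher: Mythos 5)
Your proposal is correct and follows essentially the same route as the paper: model the queries as a graph, use an edge-count/averaging argument to extract $\Omega(c^*)$ components of size $O(\log n)$, apply the method-of-types lower bound on $\Pr[\mathrm{Binomial}(s,p)\geq s/2]$ (the paper cites Theorem 11.1.4 of Cover--Thomas, giving $2^{-sD(\frac12\|p)}/(s+1)^2$, matching your estimate up to the polynomial factor), and multiply the independent per-component success probabilities to conclude the overall success probability is $o(1)$. No gaps; the only cosmetic difference is that the paper phrases the final step as a product over components rather than via $\exp(-\sum_i p_i)$.
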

To prove this theorem we will need the following lemma.
\begin{lemma}[Theorem 11.1.4 in \cite{cover2012elements}]\label{lem:lbc}
Consider a vector $\mathbf{X} \in \{0,1\}^{n}$ whose  elements are i.i.d random variables sampled  according to $\textup{Ber}(p)$, $p <\frac12$. The probability that more than $\frac{n}{2}$  are \texttt{1} is  at least  $\frac{2^{-nD(\frac{1}{2}||p)}}{(n+1)^{2}}$.   
\end{lemma}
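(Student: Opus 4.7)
The plan is to invoke the method of types for the Bernoulli$(p)$ product measure. With $S_n = \sum_{i=1}^n X_i$, the target probability decomposes as $\Pr(S_n > n/2) = \sum_{k > n/2}\binom{n}{k}p^k(1-p)^{n-k}$, where each summand equals the mass that $\textup{Ber}(p)^{\otimes n}$ places on the type class of binary sequences with empirical frequency $k/n$.

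First I would keep only one favorable summand. Set $k^* := \lfloor n/2 \rfloor + 1$, the smallest integer strictly greater than $n/2$, so $k^*/n \in (1/2,\,1/2+1/n]$. Combining the standard binomial-coefficient bound $\binom{n}{k} \geq \frac{1}{n+1}\,2^{nH(k/n)}$ with the identity $p^k(1-p)^{n-k} = 2^{-n[H(k/n)+D(k/n\|p)]}$ yields
$$\Pr(S_n > n/2) \;\geq\; \binom{n}{k^*}p^{k^*}(1-p)^{n-k^*} \;\geq\; \frac{1}{n+1}\,2^{-nD(k^*/n\,\|\,p)},$$
which is the binary-alphabet instance of the type-class lower bound underlying Cover--Thomas Theorem~11.1.4.

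To finish, I would replace $D(k^*/n\,\|\,p)$ by $D(1/2\,\|\,p)$. Writing $k^*/n = 1/2 + \delta$ with $0 < \delta \leq 1/n$ and noting that $q \mapsto D(q\|p)$ has derivative $\log\frac{1-p}{p}$ at $q = 1/2$, a one-line Taylor expansion gives $D(1/2+\delta\,\|\,p) - D(1/2\,\|\,p) \leq \delta \log\frac{1-p}{p} + O(\delta^2)$, and multiplying by $n$ with $\delta \leq 1/n$ yields $n\bigl(D(k^*/n\|p) - D(1/2\|p)\bigr) \leq \log\frac{1-p}{p} + O(1/n)$. Exponentiating absorbs this $\Theta(1)$ slack into the prefactor and upgrades the bound to $\frac{2^{-nD(1/2\|p)}}{(n+1)^2}$, as claimed.

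The main (and only) subtle step is this last comparison: invoking the type-class inequality directly at $k^*/n$ gives exponent $D(k^*/n\|p)$, which is strictly larger than $D(1/2\|p)$ because $D(q\|p)$ is increasing on $(p,1)$ and $p < 1/2 < k^*/n$. The Taylor estimate shows that the gap in the exponent is only $O(1/n)$, so it costs at most one extra factor of $(n+1)$ in the prefactor — exactly the difference between the tight single-type bound $\frac{1}{n+1}\,2^{-nD}$ and the stated $\frac{1}{(n+1)^2}\,2^{-nD}$.
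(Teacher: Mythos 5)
Your approach is the right one, and it matches what the paper intends: the paper offers no proof at all, just a citation to Cover--Thomas Theorem 11.1.4, and that theorem really only lower-bounds the probability of a single type class, i.e.\ $\Pr(S_n=n/2)$ for even $n$. Your extra work --- retaining only the summand $k^\ast=\lfloor n/2\rfloor+1$ and then Taylor-expanding $q\mapsto D(q\|p)$ to compare $D(k^\ast/n\|p)$ with $D(1/2\|p)$ --- is exactly the bridge the citation silently skips, since the lemma's event is the \emph{strict} majority $S_n>n/2$. Steps one through four of your argument are correct (including the derivative $\log\frac{1-p}{p}$ at $q=1/2$ and the $O(1/n)$ control of the second-order term after multiplying by $n$).

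The one place you wave your hands is the final ``absorb the $\Theta(1)$ slack into the prefactor'' step, and it hides a genuine restriction. The slack is a multiplicative factor of $2^{-\log\frac{1-p}{p}+O(1/n)}=\frac{p}{1-p}\,(1+o(1))$, so your chain actually yields $\Pr(S_n>n/2)\ge \frac{1}{n+1}\cdot\frac{p}{1-p}\cdot 2^{-nD(1/2\|p)}(1+o(1))$, which dominates $\frac{1}{(n+1)^2}\,2^{-nD(1/2\|p)}$ only when $n+1\gtrsim\frac{1-p}{p}$. That condition is not cosmetic: the lemma as literally stated is \emph{false} without it. For $p=0.01$ and $n=10$, one computes $\Pr(S_{10}>5)\approx\binom{10}{6}10^{-12}(0.99)^4\approx 2\times 10^{-10}$, whereas $\frac{2^{-10\,D(1/2\|0.01)}}{11^2}\approx 8\times 10^{-10}$. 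So you should state explicitly that your bound holds for $n\ge\frac{1-p}{p}$ (say), and observe that this suffices for the paper's application, where the lemma is invoked on components of size $\Theta(\log n)$ with $p$ a fixed constant. With that caveat added, your proof is complete and is, if anything, more careful than the paper's bare citation.
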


\begin{proof}[Proof of Theorem~\ref{thm:d2}]
Suppose a querying scheme uses $n-a$ pairwise `same cluster' queries. Consider a graph with $n$ vertices corresponding to the querying scheme. The vertices are labeled $1, \dots, n$, and the edge $(i,j)$ exists if and only if $(i,j)$ is a query. Since the number of edges in the graph is $n-a$,  the graph has at least $a$ components. Therefore average size of any component in the graph is at most $\frac{n}{a}$. This also implies that there exists at least $\frac{a}{2}$ components with size at most $\frac{2n}{a}$ each (using Markov inequality).

Consider the elements corresponding to vertices of one such component. Even if all the possible `same cluster' queries were made within this group, we will still have two possible labelings that would be consistent with all possible query answers (for every assignment of labels, a new assignment can be created by flipping all the labels). Since this set of elements are not queried with any element outside of it, this will give rise to $2^{a/2}$ different possibilities. This situation can only be mitigated if we can turn the clustering into labeling. Since there are two possible labelings within a group, we will make a mistake in labeling only when the number of elements with label \texttt{1} is larger than the number of elements with label \texttt{0} (the maximum likelihood decoding will fail).

Now, using Lemma~\ref{lem:lbc}, within a component of size at most $\frac{2n}{a}$, the probability that the number of elements with label \texttt{1} is less than the number of elements with label \texttt{0}  is at most 
$$
1-\frac{2^{-2nD(\frac{1}{2}||p)/a}}{(2n/a+1)^{2}}.
$$
And the probability that this happens for all $\frac{a}{2}$ such components is at most

\begin{align*}
&\Big(1-\frac{2^{-2nD(\frac{1}{2}||p)/a}}{(2n/a+1)^{2}}\Big)^{a/2} \\
&\le \exp\Big(-\frac{a2^{-2nD(\frac{1}{2}||p)/a}}{2(2n/a+1)^{2}}\Big)  = o(1),
\end{align*}

if we substitute $a= \frac{(2+\epsilon)nD(\frac{1}{2}||p)}{\log n}$ for any positive constant $\epsilon>0$. 

\end{proof}
The main takeaway from this part is that, by exploiting the prior probabilities (or relative cluster sizes), it is possible to infer the labels with strictly less than $n$ `same cluster' queries. However, to make the deduction  with $o(n)$ queries we need to look at either a different type of querying, or involve more than two elements in a query.

\subsubsection{XOR queries for larger $\Delta$}

\noindent{\bf Querying scheme:} We use the following type of queries. For each query, labels of  $\Delta$ elements are given  to the oracle,  and the oracle returns a simple XOR operation of the labels. Note, for $\Delta =2$, our queries are just `same cluster' queries. Let us define $\mathbf{Q}$ to be the binary query matrix of dimension $m \times n$ where each row has at most $\Delta$ ones, other entries being zero. Now for a label vector $\mathbf{X}$ we can represent the set of query outputs by $\mathbf{Y} = \mathbf{Q}\mathbf{X} \pmod{2}$. In order to fulfill our objective, we will define a random ensemble of query matrices $\mathcal{Q}$ from which $\mathbf{Q}$ is sampled and subsequently, we will show that the average probability of error goes to zero asymptotically with $n$ that implies the existence of a good matrix $\mathbf{Q}$ in the ensemble. 

\noindent \emph{Random ensemble: } The random ensemble $\mathcal{Q}$ will be defined in terms of a bipartite graph. This is done by constructing a biregular  bipartite graph $G(V_1 \sqcup V_2, E)$, with $|V_1| =n, |V_2| =m$. Here $V_1,$ called the left vertices, represents the labels; and $V_2,$ the right vertices, represents the query. The degree of each left vertex is $c$ and the degree of each right vertex is $\Delta$. We  have $|E| = m\Delta=nc$. Now, a permutation $\pi$ is randomly sampled from the set of all permutation on $\{1,2,\dots,nc\}$, the symmetric group $S_{nc}$. If we fix an ordering of the edges emanating from vertices in the left and right, then $i$th edge will be joined with the $\pi(i)$th edge on the right. 

\noindent \emph{Decoder:} In this setting we will be concerned with the exact recovery of the labels. The decoder $\Psi:\{0,1\}^{m} \to \{0,1\}^{n}$ will look at the vector $\mathbf{Q}\mathbf{X}$ and return a vector $\hat{\mathbf{X}}$ such that the Hamming weight (number of nonzero entries) of the vector is given by $|\textup{wt}(\hat{\mathbf{X}}) -np| \le n^{2/3}$ and $\mathbf{Q}\hat{\mathbf{X}} = \mathbf{Q}\mathbf{X}$. Hence, the probability of error $P_e$ can be defined as 
\begin{align*}
P_e\equiv \sum_{\mathbf{X} \in \{0,1\}^{n}} \Pr(\mathbf{X})\Pr_{\mathbf{Q} \sim \mathcal{Q}}(\Psi(\mathbf{Q}\mathbf{X}) \neq \mathbf{X}).
\end{align*}

We have the following theorem
\begin{theorem}\label{thm:1}
Consider an $m \times n$ query matrix $\mathbf{Q}$ sampled from the  ensemble $\mathcal{Q}$ of   matrices  described above, and a label vector $\bfX$ with each entry being i.i.d. $\textup{Ber}(p), p < \frac12$. Let $c,\Delta$ be the left  and the right degrees of the ensemble with $3 \le c < \Delta$ and let $\beta=\frac{2}{\Delta}\Big(\frac{1}{2\Delta^2p(1-p)e^{3c/2}}\Big)^{\frac{1}{c-2}}$. If the number of queries
\begin{align*}
m > n\max_{\beta \le x \le 2p} \frac{pH\Big(\frac{x}{2p}\Big)+(1-p)H\Big(\frac{x}{2(1-p)}\Big)}{1-\log \Big(1+\Big(1-2x \Big)^{\Delta}\Big)},
\end{align*}
then the average probability of error $P_e$ goes to zero as
\begin{align*}
P_e \le n^{2-c}(1-p+p^{2})(\Delta c)^{c}(1+o(1)).
\end{align*}
\end{theorem}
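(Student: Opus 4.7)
The plan is to prove Theorem~\ref{thm:1} by a union bound over nonzero error patterns $\mathbf{e} = \mathbf{X} \oplus \hat{\mathbf{X}}$ satisfying $\mathbf{Q}\mathbf{e} \equiv 0 \pmod 2$, averaging over $\mathbf{Q} \sim \mathcal{Q}$ and the prior on $\mathbf{X}$. The decoder errs only if either the true $\mathbf{X}$ falls outside the weight band $[np - n^{2/3}, np + n^{2/3}]$, which by a Chernoff bound happens with probability $\exp(-\Omega(n^{1/3}))$, or some $\hat{\mathbf{X}} = \mathbf{X} \oplus \mathbf{e}$ in that band satisfies $\mathbf{Q}\hat{\mathbf{X}} = \mathbf{Q}\mathbf{X}$. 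The first event is absorbed into the $o(1)$ term, so I will focus on bounding the expected number of admissible bad $\mathbf{e}$ conditional on a typical $\mathbf{X}$.

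For a weight-$w$ error with $w = xn$, writing $a$ for the number of ones of $\mathbf{e}$ lying in $\supp(\mathbf{X})$, one has $\wt(\hat{\mathbf{X}}) = \wt(\mathbf{X}) + w - 2a$, so the band constraint forces $a = w/2 \pm O(n^{2/3})$. The number of admissible weight-$w$ errors is therefore, up to polynomial factors, $\binom{np}{w/2}\binom{n(1-p)}{w/2}$, which by Stirling contributes an exponent $n\bigl[p H(x/(2p)) + (1-p) H(x/(2(1-p)))\bigr]$. For weights $w = \Theta(n)$, the random permutation in $\mathcal{Q}$ acts, to leading order, like independent uniform edge placement, so each of the $m$ parity checks is independently even with probability $(1 + (1-2x)^\Delta)/2$; hence $\Pr_{\mathcal{Q}}[\mathbf{Q}\mathbf{e} = 0] \approx ((1+(1-2x)^\Delta)/2)^m$. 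Demanding that the product of the admissibility count and the parity probability, summed over $x \in [\beta, 2p]$, be $o(1)$ yields exactly the inequality on $m$ in the statement.

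The main obstacle is the small-weight regime $0 < w < \beta n$, where the Bernoulli approximation to the permutation model is too loose and one must bound the expected number of low-weight kernel vectors of $\mathbf{Q}$ directly. For a fixed set of $w$ variable nodes, their $cw$ edge-sockets must be matched into checks so that every touched check receives an even count; the dominant configuration uses $cw/2$ distinct checks, each receiving exactly two sockets. A direct count via falling factorials in the configuration model produces, for $w = 2$, an expected count of order $n^{2-c}$ with combinatorial prefactor $(\Delta c)^c$ coming from the socket-choices on the two variable nodes, while larger $w$ are suppressed by extra powers of $1/n$. Multiplying by the conditional probability that the resulting $\hat{\mathbf{X}}$ lies in the weight band supplies the residual factor $(1 - p + p^2)$, arising from the marginal distribution of $\mathbf{X}$ on the two flipped coordinates, and gives the claimed $n^{2-c}(1-p+p^2)(\Delta c)^c(1+o(1))$ bound. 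Finally, $\beta$ is chosen as the crossover point where the small-weight and moderate-weight estimates balance; solving the resulting equation yields the closed-form expression in the theorem, and the hypothesis $c \ge 3$ is needed precisely so that $n^{2-c} \to 0$.
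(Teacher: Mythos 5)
Your proposal follows essentially the same route as the paper's own proof: a union bound over kernel vectors of $\mathbf{Q}$ restricted to the typical weight band, a split at weight $\beta n$ with a configuration-model falling-factorial count below the threshold (dominated by the weight-two term of order $n^{2-c}(\Delta c)^{c}$, with $\beta$ fixed by where the term-by-term ratio drops below $\tfrac12$) and the entropy-versus-$\bigl(\tfrac{1+(1-2x)^{\Delta}}{2}\bigr)^{m}$ exponent comparison above it, which is exactly what produces the stated condition on $m$. The only piece of the paper's argument you omit is the separate bookkeeping for odd-weight error patterns --- in particular weight one, which is feasible when $c$ is even and which the paper disposes of via the claim $f(1)=o(f(2))$ --- but otherwise the two arguments coincide.
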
 
The proof of this theorem is delegated to Section~\ref{sec:thm1}. The same ensemble $\mathcal{Q}$ was used by \cite{miller2001bounds} where the authors showed the existence of linear codes achieving zero probability of error in the Binary Symmetric Channel such that the parity check matrix of the code belonged to the ensemble $\mathcal{Q}$. Because of the duality of source-channel coding, their guarantees on the average probability of error for $\mathcal{Q}$ directly translate to our setting as well. However, our analysis is slightly different from \cite{miller2001bounds} and it is tighter in most cases. We have delegated the detailed comparison of the two analyses to Section~\ref{sec:thm1}. 
The achievability result is depicted in Figure~\ref{fig:first}.

\subsubsection{Lower bounds (converse)}
\label{sec:lower_bounds}
Now we provide some necessary conditions on the number of queries, involving $\Delta$ elements at most, required for a full recovery of labels. First of all notice that, if a query involves at most $\Delta$ elements, then  $\lceil \frac{n}{\Delta} \rceil$ queries are necessary for exact recovery. If a particular label is not present in any query, then the decoder has no other choice but to guess the label. This will lead to a constant probability of error $\min(p,1-p)$. Therefore at least $\lceil{ \frac{n}{\Delta}} \rceil$ queries are necessary so that every label is present in at least one query. 

Adapting Gallager's result for low density parity-check matrix codes for our setting (using source-channel duality for linear codes) we can have the following lower bound on the number of queries.

\begin{theorem}[\cite{gallager1962low}]\label{thm:Gallager}
Assume a  label vector $\bfX$ with each entry being i.i.d. $\textup{Ber}(p), p < \frac12$. If the number of XOR queries, each involving at most $\Delta$ labels, is  less than $\frac{n H(p)}{H(\frac{1+(1-2p)^{\Delta}}{2})}$, then the probability of error in recovery of labels is bounded from below by a constant independent of the number of elements $n$.
\end{theorem}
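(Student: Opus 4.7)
The strategy is a direct Fano-type converse: bound the per-query entropy, sum up, and compare with the source entropy $nH(p)$. Denote by $H_\Delta := H\bigl(\frac{1+(1-2p)^{\Delta}}{2}\bigr)$ the quantity appearing in the theorem, and let $\hat{\bfX} = \Psi(\bfY)$ be any decoder's output with error probability $P_e := \Pr(\hat{\bfX}\neq \bfX)$.

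The first step is to upper bound $H(Y_i)$ for each query $Y_i$. If $Y_i$ is the XOR of a subset $S_i \subseteq [n]$ with $|S_i|\le\Delta$ of i.i.d.\ $\textup{Ber}(p)$ labels, then by the standard ``piling'' computation $\Ex[(-1)^{Y_i}]=\prod_{j\in S_i}\Ex[(-1)^{X_j}]=(1-2p)^{|S_i|}$, so $Y_i\sim \textup{Ber}\bigl(\frac{1-(1-2p)^{|S_i|}}{2}\bigr)$ and hence $H(Y_i) = H\bigl(\frac{1+(1-2p)^{|S_i|}}{2}\bigr)$. For $p<\tfrac12$, the map $t\mapsto H\bigl(\frac{1+(1-2p)^{t}}{2}\bigr)$ is nondecreasing in $t$ (as $(1-2p)^t$ shrinks toward $0$), so $H(Y_i)\le H_\Delta$ for every query. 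Subadditivity then gives $H(\bfY)\le \sum_{i=1}^m H(Y_i) \le m H_\Delta$.

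The second step combines this with Fano's inequality. Since $\bfX\in\{0,1\}^n$, Fano yields $H(\bfX\mid\hat{\bfX}) \le h(P_e) + P_e\log_2(2^n-1) \le 1+nP_e$, and data processing ($\hat\bfX$ is a function of $\bfY$) gives $H(\bfX\mid \bfY)\le H(\bfX\mid \hat{\bfX})$. Combining with $H(\bfX)=nH(p)$,
\[
nH(p) \;=\; I(\bfX;\bfY)+H(\bfX\mid\bfY) \;\le\; H(\bfY) + 1 + nP_e \;\le\; mH_\Delta + 1 + nP_e,
\]
so $P_e \ge H(p) - \tfrac{m}{n}H_\Delta - \tfrac{1}{n}$. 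If $m \le (1-\eps)\tfrac{nH(p)}{H_\Delta}$ for any fixed $\eps>0$, this yields $P_e \ge \eps H(p) - 1/n$, a positive constant for large $n$.

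There is really no hard obstacle here: the piling-lemma identity for XOR of Bernoullis is elementary, and the remainder is textbook Fano plus subadditivity. The only subtlety worth flagging is the quantitative strength of the bound: Fano's argument gives a nontrivial lower bound on $P_e$ only when $m$ is bounded \emph{strictly below} $\tfrac{nH(p)}{H_\Delta}$ by a constant fraction, which is the standard weak-converse phrasing implicit in the statement. This matches exactly the source-coding dual of Gallager's parity-check bound for LDPC codes on BSC$(p)$, where the ``syndrome noise'' $\frac{1-(1-2p)^\Delta}{2}$ plays the role of $q_\Delta$.
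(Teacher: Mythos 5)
Your argument is correct, and in fact the paper gives no proof of this statement at all: Theorem~\ref{thm:Gallager} is imported verbatim from Gallager's 1962 LDPC paper via the source--channel duality the authors mention in passing, so your write-up is a reconstruction of the intended argument rather than an alternative to one. The substance is exactly Gallager's: the piling-up computation $\Ex[(-1)^{Y_i}]=(1-2p)^{|S_i|}$ gives $H(Y_i)\le H\bigl(\tfrac{1+(1-2p)^{\Delta}}{2}\bigr)$ (using monotonicity in $|S_i|\le\Delta$ for $p<\tfrac12$), subadditivity gives $H(\bfY)\le mH_\Delta$, and Fano plus $H(\bfX)=nH(p)$ closes the loop. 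The one caveat you flag is real and worth keeping: the chain $nH(p)\le mH_\Delta+1+nP_e$ yields a \emph{constant} lower bound on $P_e$ only when $m\le(1-\eps)\,nH(p)/H_\Delta$ for some fixed $\eps>0$; at $m=nH(p)/H_\Delta-O(1)$ the bound degenerates to $P_e\ge O(1/n)$. The theorem as stated in the paper (``less than $nH(p)/H_\Delta$'') is therefore slightly stronger than what this weak-converse argument literally delivers, but this is a standard looseness of phrasing in such converses and not a defect of your proof.
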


However Gallager's result is valid for only XOR queries. We can provide a lower bound that is close to Gallager's bound, and holds for any type of query function.
\begin{theorem}
\label{thm:2}
Assume a  label vector $\bfX$ with each entry being i.i.d. $\textup{Ber}(p), p < \frac12$. The minimum number of queries, each involving  at most $\Delta$ elements, necessary to  recover all labels with high probability is at least by $n H(p)\cdot\max\{ 1, \max_{\rho} \frac{(1-\rho)}{H(\frac{(1-\rho )r(p)\Delta}{\rho})}\} $ where $r(p)\equiv2p(1-p).$ 
\end{theorem}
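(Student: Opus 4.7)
The statement is the maximum of two lower bounds, so I prove each separately. The trivial bound $m\ge nH(p)(1-o(1))$ is just the Shannon source-coding converse: for exact recovery with high probability, Fano's inequality forces $H(\bfX\mid\bfY)=o(n)$, hence $m\ge H(\bfY)\ge H(\bfX)-o(n)=nH(p)-o(n)$.

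For the second, non-trivial bound I plan to use a sphere-packing (injectivity-counting) argument that exploits the $\Delta$-sparsity of the queries. For exact recovery, the map $Q:\bfX\mapsto\bfY$ must be injective on (essentially) the typical set. Fix a typical anchor $\bfX_0$ with $\bfY_0=Q(\bfX_0)$ and a distance $\tau n$. A classical type-counting argument (choose $\tau n/2$ ones of $\bfX_0$ to flip to zero and $\tau n/2$ zeros to flip to one, so as to remain typical) gives that the number of typical $\bfX$ with $d_H(\bfX,\bfX_0)=\tau n$ is $2^{n\phi(\tau,p)+o(n)}$, where
\[
\phi(\tau,p)\;=\;p\,H\!\Bigl(\tfrac{\tau}{2p}\Bigr)+(1-p)\,H\!\Bigl(\tfrac{\tau}{2(1-p)}\Bigr).
\]
On the image side, each flipped label can change only the queries containing it; since a label appears in $m\Delta/n$ queries on average, a concentration argument over a uniformly random $\bfX$ in the distance-$\tau n$ shell shows that a constant fraction of such $\bfX$'s land with $d_H(\bfY,\bfY_0)\le(1+o(1))\tau m\Delta$. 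This image ball in $\{0,1\}^m$ has volume at most $2^{mH(\tau\Delta)}$ provided $\tau\Delta\le 1/2$. Injectivity therefore forces $m\ge n\,\phi(\tau,p)/H(\tau\Delta)\cdot(1-o(1))$.

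Finally, set $\tau=(1-\rho)r(p)/\rho$, so that $\tau\Delta=(1-\rho)r(p)\Delta/\rho$. A short algebraic check (using concavity of $H$ together with the identity $\phi(r(p),p)=H(p)$ at $\rho=1/2$) verifies $\phi(\tau,p)\ge(1-\rho)H(p)$ throughout the admissible range of $\rho$. Substituting gives $m\ge n(1-\rho)H(p)/H((1-\rho)r(p)\Delta/\rho)$, and maximizing over $\rho$ yields the theorem.

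The main obstacle I anticipate is controlling $d_H(\bfY,\bfY_0)$ in a suitably averaged sense when the query graph is unbalanced: a worst-case bound degrades with $\max_i c_i$ (the maximum number of queries any single label appears in), which can be as large as $m$. The cleanest workaround is the concentration/averaging step sketched above, which only keeps a constant fraction of the shell but still suffices for the counting inequality; an alternative is a symmetrization over random relabelings of $[n]$. A secondary technical point is the algebraic verification that $\phi\bigl((1-\rho)r(p)/\rho,p\bigr)\ge(1-\rho)H(p)$ together with the admissibility condition $(1-\rho)r(p)\Delta/\rho\le 1/2$, both of which are routine concavity computations but must be carried out to pin down the exact range of $\rho$ over which the bound is operative.
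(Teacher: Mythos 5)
Your overall architecture---exact recovery forces the query map to be injective on the typical set, so compare the number of typical inputs in a shell around an anchor $\bfX_0$ with the volume of the Hamming ball into which their query outputs must fall, the radius being controlled by the fact that each label touches few queries---is exactly the paper's, and your first part ($m \ge nH(p)-o(n)$) correctly accounts for the ``$1$'' in the max. The genuine gap is in the image-radius step. You flip a random admissible set $F$ of $\tau n$ coordinates of $\bfX_0$ and claim a constant fraction of such flips move the output by at most $(1+o(1))\tau m\Delta$ because the average query-degree is $m\Delta/n$. But the displacement is $\sum_{i\in F}c_i$, where $c_i$ is the number of queries containing coordinate $i$, and $F$ is not a uniform $\tau$-fraction of $[n]$: to stay typical you flip a $\tau/(2p)$ fraction of the ones of $\bfX_0$ and a $\tau/(2(1-p))$ fraction of its zeros. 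If the scheme concentrates its queries on coordinates where $\bfX_0$ equals $1$ (e.g., all queries among one-labelled positions, each of degree $m\Delta/(np)$), then \emph{every} admissible $F$ gives $\sum_{i\in F}c_i=\tau m\Delta/(2p)\gg \tau m\Delta$ for $p<1/2$, so no constant fraction of the shell lands in your ball. Averaging over the anchor $\bfX_0$ as well only restores the mean $\tau m\Delta$; a Markov step then costs a constant factor in the radius (and true concentration fails when the degree sequence is skewed), leaving $H(c\tau\Delta)$ with $c>1$ in the denominator, which does not recover the stated bound.

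The fix---and what the paper actually does---is to make the degree bound worst-case rather than average-case \emph{before} perturbing: since $\sum_i c_i\le m\Delta$, Markov's inequality gives a set $S$ of $(1-\rho)n$ coordinates with every $c_i\le \Delta m/(\rho n)$. Fix the labels outside $S$ and let the $2^{(1-\rho)nH(p)}$ typical completions on $S$ play the role of your shell; almost all of them lie at distance $(1-\rho)r(p)n+o(n)$ from $\bfX_0$, and now \emph{deterministically} each such vector's output lies within $(1-\rho)r(p)n\cdot \Delta m/(\rho n)=(1-\rho)r(p)\Delta m/\rho$ of $\bfY_0$. The volume comparison then yields the theorem directly, with numerator $(1-\rho)nH(p)$ in place of your $n\phi(\tau,p)$, so the concavity verification you flagged as a secondary worry is not needed either.
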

\begin{proof}
Every query involves at most $\Delta$ elements. Therefore the average number of queries an element is part of is 
$\frac{\Delta m}{n}$. Therefore $1-\rho$ fraction of all the elements (say the set $S \subset \{1, \dots , n\}$) are part of less than $\frac{\Delta m}{\rho n}$ queries. Now consider the set $\{1, \dots , n\} \setminus S$. Consider all typical label vectors $\cC\in \{0,1\}^n$ such that their projection on $\{1, \dots , n\} \setminus S$ is a fixed typical sequence. We know that there are $2^{n(1-\rho)H(p)}$ such sequences. Let $\bfX_0$ be one of these sequences. Now, almost all sequences of $\cC$ must  have a distance of $n(1-\rho )r(p)+o(n)$  from $\bfX_{0}$. Let $\bfY_0$ be the corresponding query outputs when $\bfX_0$ is the input. Now any query output for input belonging to $\cC$ must reside in a Hamming ball of radius $\dfrac{(1-\rho )r(p)\Delta m}{\rho}$ from $\bfY_{0}$. Therefore, comparing the volume of the balls, we must have 
$ mH(\frac{(1-\rho )r(p)\Delta}{\rho}) \geqslant n(1-\rho)H(p).$
\end{proof}


\begin{figure}
\centering
       \includegraphics[width=0.55\textwidth]{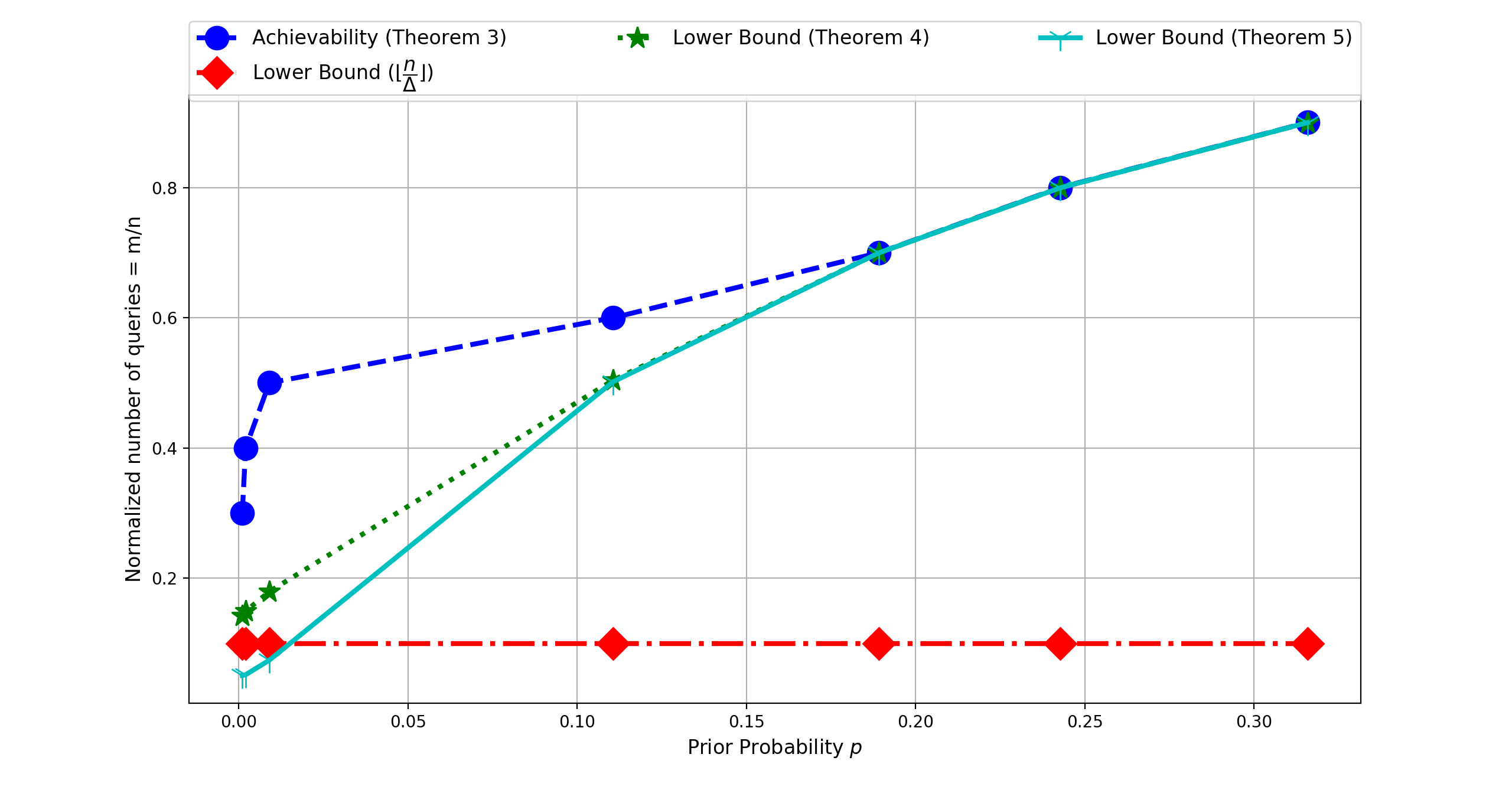}
    \caption{Normalized Query complexity $\frac{m}{n}$ of Theorem~\ref{thm:1} (blue) as a function of prior probability $p$, for $\Delta=10$. \label{fig:first}}
\end{figure}


Finally, in Figure \ref{fig:first}, we have compared the achievability scheme (Theorem \ref{thm:1}) and the lower bounds presented above for $\Delta=10$. For larger $\Delta$ the bounds are even closer. 

\subsection{Noiseless queries and approximate recovery}
\label{subsec:Perfect+Approximate}
Again let us consider the case when $k=2$, i.e., only two possible labels.
Let $\bfX\in \{0,1\}^n$ be the label vector. Suppose we have a querying algorithm that, by using $m$ queries, recovers a label vector $\hat{\bfX}$. \begin{definition}We call a querying algorithm to be $(1-\delta)$-good if for any label vector, at least $(1-\delta) n$ labels are correctly recovered. If the querying algorithm is randomized then we want at least $(1-\delta) n$ labels in expectation to be correctly recovered. This means for any label and recovered label pair $\bfX, \hat{\bfX}$, the Hamming distance is at most $\delta n$. For an  equivalent definition, we can define a distortion function $d(X, \hat{X}) = X+ \hat{X} \mod 2$, for any two labels $X, \hat{X}\in \{0,1\}$. We can see that $\avg d(X, \hat{X}) = \Pr(X\neq \hat{X})$, which we want to be  bounded by $\delta$.\end{definition}
%
%
Using standard rate-distortion theory \cite{cover2012elements}, it can be seen that, if the queries could involve an arbitrary number of elements then with $m$ queries it is possible to have a $(1-\tilde{\delta}(m/n))$-good scheme where $\tilde{\delta}(\gamma) \equiv H^{-1}(H(p) - \gamma)$. 
When each query is allowed to take only at most $\Delta$ inputs, we have the following lower bound for $(1-\delta)$-good querying algorithms.

\begin{theorem}
\label{thm:3}
In any $(1-\delta)$-good querying scheme with $m$ queries where each query can take as input $\Delta$ elements, the following must be satisfied (below $h'(x) = \frac{dH(x)}{dx}$):
$$
\delta \ge \tilde{\delta}\big(\frac{m}{n}\big)+\frac{H(p)-H(\tilde{\delta}(\frac{m}{n}))}{h'(\tilde{\delta}(\frac{m}{n}))(1+e^{\Delta h'(\tilde{\delta}(\frac{m}{n}))})}.
$$
%
\end{theorem}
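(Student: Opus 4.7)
My plan is to refine the standard rate--distortion converse by isolating a correction term that captures the $\Delta$-locality constraint on the encoder. Throughout, set $\delta^* := \tilde\delta(m/n)$, so $H(\delta^*) = H(p) - m/n$ by definition of $\tilde\delta$, and write $\alpha := h'(\delta^*)$; note $\alpha > 0$ whenever $\delta^* < \tfrac12$.

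The first step is a sharpened per-coordinate Fano bound. Let $\delta_i := \Pr(\hat X_i \neq X_i)$, so $\delta = \tfrac1n\sum_i \delta_i$, and let $\pi_i := \Pr(X_i = 1 \mid Y)$ be the posterior of the $i$-th label. Since $H$ is concave, its graph lies below its tangent at $\delta^*$, yielding the pointwise inequality $H(\pi) \leq H(\delta^*) + \alpha(\min(\pi, 1-\pi) - \delta^*)$ for all $\pi \in [0,1]$. Taking expectation over $Y$, and observing that an optimal MAP decoder realizes $\delta_i = \avg[\min(\pi_i, 1-\pi_i)]$, I obtain $H(X_i \mid Y) \leq H(\delta^*) + \alpha(\delta_i - \delta^*)$. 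Summing over $i$ gives $\sum_i H(X_i \mid Y) \leq nH(\delta^*) + n\alpha(\delta - \delta^*)$.

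The second step lower bounds $\sum_i H(X_i \mid Y)$ using locality. The standard chain $\sum_i H(X_i \mid Y) \geq H(X \mid Y) = nH(p) - H(Y) \geq nH(p) - m = nH(\delta^*)$ only recovers $\delta \geq \delta^*$. To produce the claimed extra term, I would strengthen this to
\[
\sum_i H(X_i \mid Y) \;\geq\; nH(\delta^*) + \frac{m}{1 + e^{\Delta\alpha}}.
\]
Combined with the first step this yields $n\alpha(\delta - \delta^*) \geq m/(1+e^{\Delta\alpha})$, which after substituting $m/n = H(p) - H(\delta^*)$ is exactly the stated inequality.

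The main obstacle is establishing this sharpened lower bound. A natural strategy is to decompose the gap query-by-query as $\sum_i H(X_i\mid Y) - H(X\mid Y) = \sum_j \bigl[H(Y_j \mid Y^{j-1}) - \sum_{i \in N(j)} I(X_i; Y_j \mid Y^{j-1})\bigr]$ and bound each summand, using that $Y_j$ is a binary function of only $\Delta$ nearly i.i.d.\ $\mathrm{Ber}(p)$ inputs. The quantity $1/(1+e^{\Delta\alpha}) = \delta^{*\Delta}/(\delta^{*\Delta} + (1-\delta^*)^\Delta)$ is precisely the weight on the "minority" pattern under the $\Delta$-fold Bernoulli$(\delta^*)$ reference measure that realises the rate--distortion optimum, strongly suggesting the bound arises from a Chernoff/exponential-tilting argument comparing the true posterior of $X_{N(j)}$ to this reference tilt. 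The technical subtlety is that the conditional distribution of $X_{N(j)}$ given $Y^{j-1}$ is a skewed perturbation of the product prior, and the per-query bound must be shown to be robust to this drift uniformly in $j$; handling this is the crux of the proof.
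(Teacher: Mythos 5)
Your Step 1 (the tangent-line sharpening of the per-coordinate Fano bound) is fine, but it is routine; the entire content of the theorem sits in your Step 2, the inequality $\sum_i H(X_i\mid \bfY)\ \ge\ nH(\tilde{\delta})+\tfrac{m}{1+e^{\Delta h'(\tilde{\delta})}}$, and this you assert rather than prove. Given Step 1, that inequality is essentially a restatement of the theorem, so the proposal as written is circular at its core. The sketched route to it does not close the gap: the decomposition $\sum_i H(X_i\mid \bfY)-H(\bfX\mid \bfY)=\sum_j\bigl[H(Y_j\mid Y^{j-1})-\sum_{i\in N(j)}I(X_i;Y_j\mid Y^{j-1})\bigr]$ is not an identity, because for $i\notin N(j)$ the term $I(X_i;Y_j\mid Y^{j-1})$ need not vanish (conditioning on earlier answers correlates all coordinates), and you yourself flag, without resolving, that the conditional law of $X_{N(j)}$ given $Y^{j-1}$ drifts away from the product prior. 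Observing that $1/(1+e^{\Delta h'(\tilde{\delta})})$ equals the minority-pattern weight of the tilted reference measure is suggestive numerology, not an argument. So there is a genuine missing idea here, not just an unchecked technicality.

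For comparison, the paper proves the bound by an entirely combinatorial covering/counting argument rather than an entropy decomposition: it double-counts source--codeword pairs at distortion $\tilde{\delta}+\epsilon+\gamma$, upper-bounding the count by $2^{nR}\cdot\mathrm{Vol}(\tilde{\delta}+\epsilon+\gamma)$ from the codeword side, and lower-bounding it from the source side by showing that perturbing any $x<\gamma/(R\Delta)$ fraction of the $m$ query answers yields $2^{nRH(\gamma/(R\Delta))}$ distinct candidate codewords within the enlarged ball (this is where $\Delta$-locality enters: few flipped answers can be realized by few flipped labels). Comparing the two counts gives $R(\tilde{\delta})-R(\tilde{\delta}+\epsilon+\gamma)\ge RH(\gamma/(R\Delta))$, and optimizing over the slack $\gamma$ produces exactly the $1/(1+e^{\Delta h'(\tilde{\delta})})$ factor. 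If you want to salvage your approach, you would need to either prove your Step 2 directly or import an argument of this covering type; as it stands the proof is incomplete.
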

The proof of this theorem is somewhat involved, and we have provided it in Section~\ref{sec:thm3}.


One of the main observation that we make  is that the `same cluster' queries are highly inefficient for approximate recovery. This follows from a classical result of Ancheta and Massey \cite{massey1977joint} on the limitation of linear codes as rate-distortion codes. Recall that, the `same cluster' queries are equivalent to XOR operation in the binary field, which is a linear operation on $GF(2)$. We rephrase a conjecture by Massey in our terminology.
 \begin{conjecture}[`same cluster' query lower bound]
 For any $(1-\delta)$-good scheme with $m$ `same-cluster' queries ($\Delta =2$), the following must be satisfied:
$
\delta \ge p(1-\frac{m}{nH(p)}).
$
\end{conjecture}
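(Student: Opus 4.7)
The plan is to reduce the conjectured lower bound to a per-component inequality by exploiting the graph structure of pairwise same-cluster queries, and then to prove that inequality analytically. Every `same cluster' query on the pair $(i,j)$ is the $\mathbf{F}_2$-linear map $X_i \oplus X_j$, so the $m$ queries form a graph $G$ on the $n$ label vertices. The XORs around any cycle of $G$ sum to zero, so each cycle edge gives a linearly dependent observation that can be discarded without weakening any decoder; hence we may assume $G$ is a forest with connected components of sizes $s_1,\ldots,s_c$ satisfying $\sum_i s_i = n$ and $\sum_i (s_i-1) \le m$.

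Within a tree component of size $s$, the observations pin down the labeling up to a single global complement, and the posterior on the two candidates is proportional to $p^{w}(1-p)^{s-w}$ in the weight $w$. For $p<1/2$ the Bayes-optimal decoder outputs the lower-weight candidate, and so errs on \emph{every} coordinate of the component exactly when the true weight exceeds $s/2$. Setting
\[
\phi(s) \equiv \Pr\bigl(\mathrm{Bin}(s,p) > s/2\bigr) + \tfrac{1}{2}\Pr\bigl(\mathrm{Bin}(s,p) = s/2\bigr),
\]
the expected number of errors contributed by a component of size $s$ is $s\,\phi(s)$, and summing gives $n\delta = \sum_i s_i\,\phi(s_i)$. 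To prove the conjecture it therefore suffices to establish the per-component inequality
\[
s\,\phi(s) \;\ge\; p\,s \;-\; \frac{(s-1)\,p}{H(p)} \qquad \text{for every integer } s \ge 1,
\]
because summing over components and using $\sum_i (s_i-1)\le m$ then yields $n\delta \ge pn - pm/H(p)$, equivalent to $\delta \ge p\bigl(1 - m/(nH(p))\bigr)$.

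The main obstacle is this per-component inequality, which is itself a single-code instance of Massey's conjecture (for the linear code whose generator is a tree's edge-vertex incidence matrix). It holds with equality at $s=1$ (where $\phi(1)=p$) and is automatic once the right-hand side turns negative, which happens for $s > 1/(1-H(p))$, so for each fixed $p\in(0,1/2)$ only finitely many values of $s$ genuinely require checking. Rewritten as $sH(p)\,\bigl(p-\phi(s)\bigr)\le (s-1)p$, it asserts that each of the $s-1$ tree edges reduces the baseline expected error in its component by at most $p/H(p)$. Two natural attacks are: (i) induction on $s$ via the recursion for $\phi$ obtained by adjoining a single leaf to a size-$s$ tree, comparing the increment $(s+1)\phi(s+1) - s\phi(s)$ to $p(1-1/H(p))$; and (ii) applying Mrs.~Gerber's Lemma to the identities $H(\bfX_{C}\mid\bfY_{C}) = H(\phi(s))$ and $I(\bfX_{C};\bfY_{C})\le s-1$ to convert them into the required comparison with the Massey line. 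Controlling the slack uniformly in both $s$ and $p$ is where new ideas appear to be needed, which is why this conjecture has remained open in the source-coding literature; any proof of the stated per-component bound would already settle the conjecture for all linear encodings whose rows have weight exactly two.
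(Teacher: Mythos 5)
First, be aware that the paper offers no proof of this statement: it is stated and labeled as a \emph{conjecture} (a rephrasing of the Ancheta--Massey conjecture on the rate--distortion performance of linear codes), and the paper records only that it is known to hold at $p=\tfrac12$. So there is no paper proof to compare against; the only question is whether your argument settles the conjecture, and by your own admission it does not.

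Your reduction is sound and is genuinely more than what the paper does. Pruning the query graph to a spanning forest loses no information in the noiseless setting; within each tree the posterior is supported on a labeling and its complement; and the bitwise-optimal decoder then incurs average distortion exactly $\frac1n\sum_i s_i\,\phi(s_i)$ with $\phi$ as you define it, so the conjecture for $\Delta=2$ XOR queries is equivalent to the per-component inequality $s\,\phi(s)\ge ps-(s-1)p/H(p)$ for all integers $s\ge1$ and all $p\in(0,\tfrac12)$. The genuine gap is that this inequality is asserted, not proved, and it is not a routine estimate. Rewriting it for $s\ge2$ as $\frac{s}{s-1}\bigl(p-\phi(s)\bigr)\le p/H(p)$, both sides tend to $\tfrac12$ as $p\to\tfrac12$ and $s\to\infty$; in the critical regime $s\asymp(\tfrac12-p)^{-2}$ (which is exactly the range $s\le 1/(1-H(p))$ that does not resolve trivially) the slack is second order in $\tfrac12-p$, so neither the crude bound $\phi(s)\ge0$ nor a CLT approximation alone closes it, and the two attacks you sketch (leaf-adjoining induction, Mrs.~Gerber) are left uncarried-out. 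Until that single binomial-tail inequality is established, what you have is a reduction rather than a proof --- albeit a useful one, since it converts Massey's conjecture for all row-weight-two linear encoders into a concrete one-parameter family of inequalities that, for each fixed $p$, requires checking only finitely many values of $s$.
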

This conjecture is known to be true  at the point $p = 0.5$ (equal sized clusters). We have plotted these two lower bounds in Figure \ref{fig:Rate0.5}. 

With `same-cluster' queries, the following nonadaptive querying scheme for approximate recovery matches the above conjecture for $p =0.5$. 

\begin{theorem}
There exists an $(1-\delta)$-good scheme with $m$ `same-cluster' queries ($\Delta =2$), with:
$
\delta = p(1-\frac{m}{n}).
$
\end{theorem}
\begin{proof}
Without loss of generality, let us assume $p < \frac12$. As discussed earlier, for $p =\frac12$, instead of labels a clustering can be obtained with minor changes in the scheme. Just consider the scheme that just compares $(1-\delta/p)n$ elements with the first element. 
 Set the label of the rest of the $n\delta/p$ elements to be 0. By following the same arguments as in Theorem~\ref{thm:-1}, the labels of the first $(1-\delta/p)n$ elements can be recovered exactly. Among the rest of the elements with high probability at most $\delta n +o(n)$ labels will be wrong. Therefore this is an $(1-\delta)$-good scheme with $m = (1-\delta/p)n$.
\end{proof}


\begin{figure}[htbp]
\centering
 \includegraphics[scale =0.5]{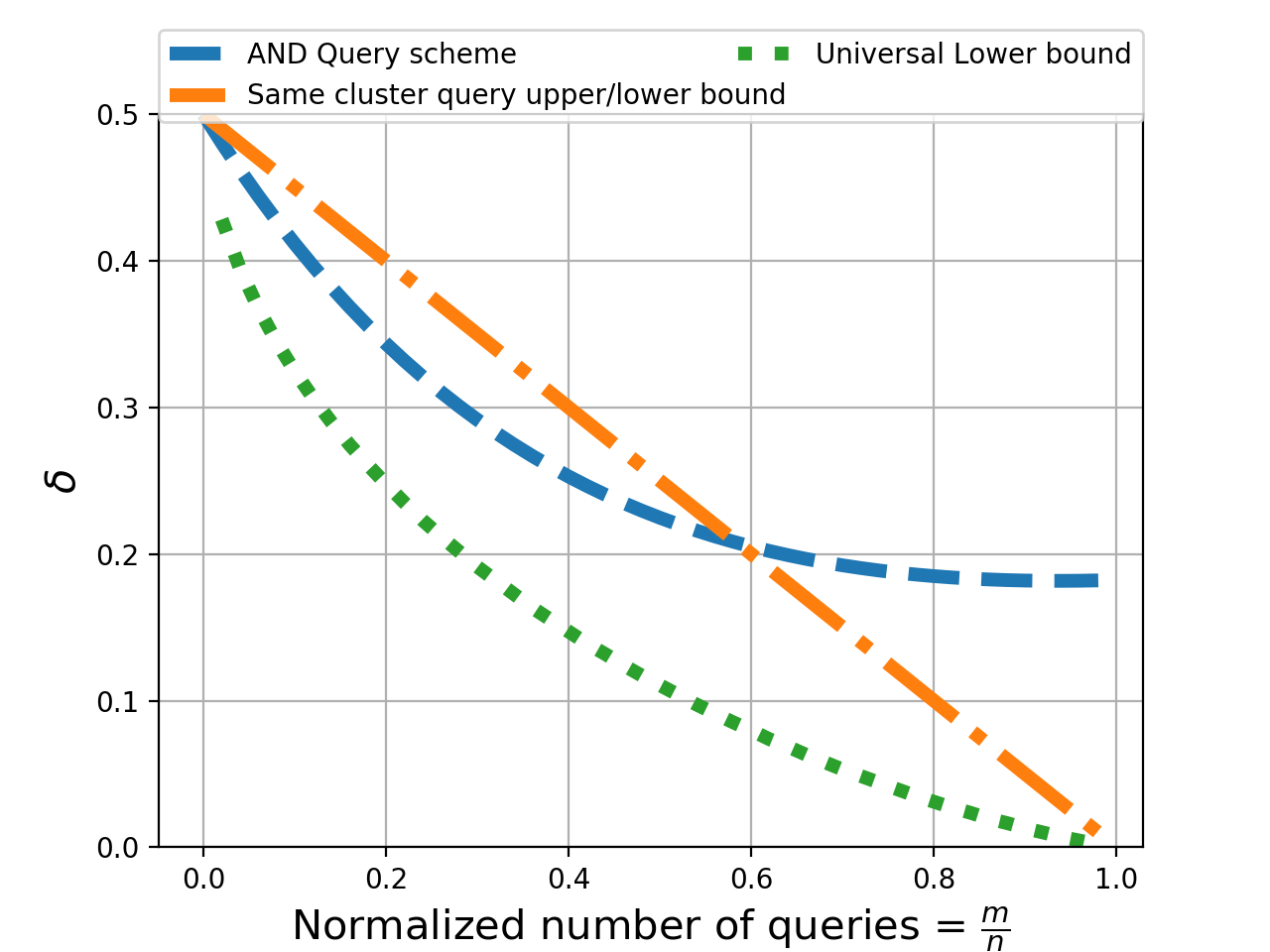}
    \caption{Performance of $(1-\delta)$-good schemes with noiseless queries; $p=0.5$. The lower bound refers to Theorem~\ref{thm:3}.\label{fig:Rate0.5}}
\end{figure}

With that, let us now provide a querying scheme with $\Delta =2$ that will provably be better than `same-cluster' schemes for a large regime of $\delta$.
 
\paragraph{Querying scheme: AND queries}  We define the AND query $Q:\{0,1\}^{2} \rightarrow \{0,1\}$ as $Q(X,X')=X \bigwedge X',$ where $X,X'\in \{0,1\},$ so that $Q(X,X')=1$ only when both the elements have labels equal to $1$. 
For each pairwise query  the oracle will return this AND operation of the labels.
\begin{theorem}
\label{thm:4}
There exists a $(1-\delta)$-good querying scheme with $m$ pairwise AND queries such that
\begin{align*}
\delta &=p(1-2/n)^m+\sum_{d=1}^{m} \binom{m}{d}(2/n)^d(1-2/n)^{m-d}\\
& \times \sum_{l=1}^{d} {n \choose l}\frac{f(l,d)}{n^{d}}(1-p)^{l}p
\end{align*}
where  
$f(l,d)=\sum_{i=0}^l (-1)^i \binom{l}{i} (l-i)^d.$  
\end{theorem}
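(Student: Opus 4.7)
The plan is to exhibit the required scheme by a random construction and an averaging argument. I would draw the $m$ pairs independently and uniformly at random from $\binom{[n]}{2}$ (or, more cleanly, include each pair independently with probability $\tfrac{2m}{n(n-1)}$ and then Poissonize; the variants agree up to lower-order terms). The decoder is the natural one-sided rule: for each element $i$, if at least one query involving $i$ returned the answer $1$ set $\hat X_i = 1$, and otherwise set $\hat X_i = 0$.

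The key structural observation is that an AND query answer of $1$ deterministically certifies that both participating labels equal $1$, so the decoder never outputs a spurious $1$. Consequently the only errors are of the form $X_i = 1$ but $\hat X_i = 0$, which can happen in two disjoint ways: either $i$ has degree zero in the query graph and the decoder defaults to $0$, or $i$ has degree $d \geq 1$ and every one of its distinct partners in the query graph carries label $0$. That defaulting to $0$ is Bayes-optimal in the latter case follows by comparing $\Pr(X_i = 0 \mid \text{all zero}) \propto 1-p$ with $\Pr(X_i = 1 \mid \text{all zero}) \propto p(1-p)^k$, the first dominating for every $p < \tfrac{1}{2}$ and every $k \geq 1$.

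Next I would compute the expected per-coordinate error rate $\Pr(\hat X_i \neq X_i)$ by conditioning on the degree $d$ of $i$ and on the number $k$ of distinct partners among its $d$ incident queries. The degree distribution is (asymptotically) Poisson with mean $2m/n$, giving the factor $e^{-2m/n}(2m/n)^d/d!$; conditional on $d$, the partners of $i$ are i.i.d.\ uniform samples from the remaining elements, so the standard balls-into-bins formula yields $\Pr(k \text{ distinct partners} \mid d) = \binom{n}{k} f(k,d)/n^d$, where $f(k,d) = \sum_{i=0}^k (-1)^i \binom{k}{i}(k-i)^d$ is the number of surjections $[d] \twoheadrightarrow [k]$. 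Finally, by independence of labels under the prior, the event ``$X_i = 1$ and all of those $k$ distinct partners have label $0$'' has probability $p(1-p)^k$. Combining the degree-zero contribution $e^{-2m/n} p$ with these conditional contributions summed over $d \ge 1$ and $k \leq d$ reproduces exactly the expression for $\delta$ stated in the theorem.

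The main technical nuisance I anticipate is bookkeeping the small discrepancies between sampling with and without replacement: self-loops, the gap between $n$ and $n-1$, Binomial versus Poisson degree distributions, and the truncation of the outer sum at $d=n$. All of these are $o(1)$ lower-order corrections that can be absorbed either by adopting a Poissonized construction outright or by carrying explicit error terms through the calculation. Once the expected per-coordinate error rate under the random construction is bounded by $\delta$, the standard averaging argument yields a deterministic choice of $m$ pairs whose expected Hamming distortion $\tfrac{1}{n}\Ex\, \dist(\bfX, \hat\bfX)$ is at most $\delta$, which is exactly the definition of a $(1-\delta)$-good scheme under the distortion-style criterion stated immediately before Theorem~\ref{thm:4}.
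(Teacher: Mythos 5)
Your proposal is correct and follows essentially the same route as the paper's proof: a random pairing with Poisson-approximated degrees, the one-sided ``output 1 iff some incident AND answer is 1'' decoder, conditioning on the degree $d$ and the number $k$ of distinct partners via the surjection count $f(k,d)$, and the observation that errors occur only when $X_i=1$ and all distinct partners are labeled $0$ (or the degree is zero). Your added remarks on the Bayes-optimality of defaulting to $0$ and on the with/without-replacement bookkeeping are refinements the paper leaves implicit, but the argument is the same.
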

\begin{proof}
Assume $p<0.5$ without loss of generality. Consider a random bipartite graph where each `left' node represent an element labeled according to the label vector $\bfX\in \{0,1\}^n$ and each `right' node represents a query. All the query answers are collected in  $\bfY\in \{0,1\}^m$. The graph has right-degree exactly equal to 2. For each query the two inputs are selected uniformly at random without replacement. 

{\em Recovery algorithm:} For each element we look at the queries that involves it and estimate its label as $1$ if any of the query answers is $1$ and predict $0$ otherwise. If there are no queries that involves the element, we simply output 0 as the label.

 Since the average left-degree is $\frac{2m}{n}$ and since all the edges from the right nodes are randomly and independently thrown,  the degree of each left-vertex is distributed according to a Binomial distribution with the mean $\lambda=\frac{2m}{n}$. We define element ${j}$ to be a two-hop-neighbor of ${i}$ if there is at least one query which involved both the elements ${i}$ and ${j}$ . Under our decoding scheme we only have an error when the label of $i$, $X_{i}=1$ and the labels of all its two-hop-neighbors are $0$. Hence the probability of error for estimating $X_{i}$ can be 
 written as, 
 $
 \Pr(X_i \ne \hat{X}_i) = \sum_d \Pr ({\rm degree}(i) = d) \Pr(X_i \ne \hat{X}_i \mid {\rm degree}(i) = d).
 $ 
 Now let us estimate $\Pr(X_i \ne \hat{X}_i \mid {\rm degree}(i) = d)$.
 We further condition the error on the event that there are $l$ distinct two-hop-neighbors (lets call the number of distinct neighbors of ${i}$ as ${\rm Dist}({i})$) and hence we have that $\Pr(X_i \ne \hat{X}_i \mid {\rm degree}(i) = d) = \sum_{l=1}^{d}\Pr({\rm Dist}({i})=l)\Pr(X_i \ne \hat{X}_i | {\rm degree}(i)=d,{\rm Dist}({i})=l) = \sum_{l=1}^{d} {n \choose l} \frac{f(l,d)}{n^{d}}p(1-p)^l$. Now using the fact that the degree distribution is Binomial($m, 2/n$) we get the statement of the theorem.
%
%
\end{proof}
The performance of this querying scheme is plotted against the number of queries  for prior probabilities $p=0.5$ 
in Figure \ref{fig:Rate0.5}. 
We see in  Figure \ref{fig:Rate0.5} that the AND query scheme beats the `same cluster' query lower bound for a range of query-performance trade-off in approximate recovery for $p =0.5$.  
 Similar trend is exhibited for smaller $p$, but the lower and upper bounds for `same cluster' queries diverge.   
\subsection{Noisy queries and approximate recovery}
\label{subsec:Faulty+Approximate}
This section contains our main algorithmic contribution. In contrast to the previous sections here we consider 
the general case of $k\ge 2$ clusters. Recall that the label vector $\bfX\in \{0,1, \dots, k-1\}^n$, and 
the prior probability of each label is given by the probabilities $\bfp = (p_0, \dots, p_{k-1})$. Let $p^\ast\equiv \min_i p_i$ be the {\em relative size of the smallest cluster}. Assume a model of noise in the oracle answer, where each answer is wrong independently with probability $q$.
In this section we study two types of queries, the `same-cluster' queries and a generalization of the AND queries. There are two reasons to study AND queries in this setting. First, we experimentally observe that the error-rate of response to AND queries is less than that of `same-cluster' queries; second, as we will see, theoretically AND queries have a better resilience to noise at the expense of being `more informative'. 

 Note that we do not allow the same query to be asked to the oracle multiple time. 
 As discussed earlier, with only `same-cluster' queries, it is only possible to resolve the labeling problem up to a permutation of labels; i.e., with the `same-cluster' queries we solve only the clustering problem. 
A $(1-\delta)$-good approximation scheme for labeling was defined before. For a clustering, a (randomized) scheme will be called $(1-\delta)$-good if (on expectation) at most $\delta$ proportion of all elements are assigned wrong clusters.

\paragraph{Same Cluster Queries:} Schemes with `same-cluster' queries in the nonadaptive setting have  been studied in \cite{mazumdar2017clustering}. Albeit \cite{mazumdar2017clustering} focuses on exact recovery with high probability instead of approximate recovery, we can modify the schemes presented therein to obtain results in our precise setting. Let us define, for a pair of elements $u,v \in \{1,\dots, n\}$, the same cluster query answers $Y_{u,v} \in  \{0,1\}$, a Bernoulli random variable. $Y_{u,v}$ is the correct answer to the `same-cluster' query between elements $u$ and $v$ with probability $1-q$ and the incorrect answer with probability $q$. 
The querying scheme is presented in Algorithm~\ref{alg:same_cluster}.  In the algorithm and its analysis, $\mathsf{XOR}:\{0,1\}^\ast \to \{0,1\}$ simply denotes the XOR function between the inputs.  At the high level,  the algorithm proceeds by randomly sampling a subset of elements.  Every pairwise `same-cluster' query within that  subset is performed. We show that it is possible to obtain the correct labeling of all elements in the subset, as long as the noise probability is small enough. Each of the elements of the subset is also compared with every element out of the subset to figure out the labels of the rest of the elements. With this algorithm, we have the following result.

\begin{algorithm} [t]                   
\caption{Noisy query approximate recovery with `same-cluster' queries}          
\label{alg:same_cluster}                           
\begin{algorithmic}                    
    \REQUIRE PRIOR $\bfp \equiv (p_0, \dots, p_{k-1})$, Noise $q$, Randomly chosen (without replacement) subset $\mathcal{S} \subset \{1, \dots, n\}$ of elements. 
    \REQUIRE Query Answers $Y_{u,v}: (u,v) \in \mathcal{S}$ and $Y_{u,v}: u \in \mathcal{S}, v\in  \{1, \dots, n\} \setminus \mathcal{S}$
     \FOR{$u,v \in \mathcal{S}, u \neq v$}
        \STATE Compute $Z_{u,v} = \sum_{z \in \mathcal{S}\setminus \{u,v\}} \mathsf{XOR}(Y_{u,z},Y_{v,z})$
     \ENDFOR 
    \STATE Set a threshold $\theta=(|\mathcal{S}|-2)(2q(1-q)+\min_{i,j \in \{0,\dots,k-1\}, i\neq j}(p_i+p_j)(1-2q)^2/2)$
    \STATE Form a graph with $\mathcal{S}$ being the set of vertices, with  $(u,v)$ being an edge if $Z_{u,v} \le \theta$. Find the connected components in the graph: $\mathcal{C}_0,\mathcal{C}_1,\dots,\mathcal{C}_{k'-1}$.
    \FOR{$u \in [n]\setminus \mathcal{S}$}
         \FOR{$i \in  \{0, 1, \dots, k'-1\}$}
             \IF{$\sum_{v \in \mathcal{C}_i} Y_{u,v} \ge |\mathcal{C}_i|/2$}
                  \STATE Assign $\mathcal{C}_i = \mathcal{C}_i  \cup \{u\}$.
             \ENDIF    
          \ENDFOR   
    \ENDFOR           
\end{algorithmic}
\end{algorithm}

\begin{theorem}
\label{thm:45}
 The querying scheme of Algorithm \ref{alg:same_cluster} with 
\begin{align*}
m &=O\big( \frac{n(q(1-q)+p^\ast(1-2q)^2)}{(p^\ast(1-2q)^2)^2} \\ 
&\cdot\log\frac{q(1-q)+p^\ast(1-2q)^2}{\delta(p^\ast(1-2q)^2)^2}\big)
\end{align*} 

`same cluster' queries is a $(1-\delta)$-good for approximate recovery of the correct clustering from noisy queries. In particular, for a constant noise probability $0 <q<1/2$, $O(\frac{n}{{p^\ast}^2} \log \frac1{\delta p^\ast})$ `same-cluster' queries suffice  for $(1-\delta)$-good recovery.
\end{theorem}

We will prove Theorem \ref{thm:45} using the following lemmas.

\begin{lemma}\label{lem:hel}
For a subset $\mathcal{S} \subseteq [n]$ of all the elements, if we perform all the possible ${|\mathcal{S}| \choose 2}$ `same-cluster' queries, then we can exactly recover a clustering of all the elements of $\mathcal{S}$ according to their labels, provided 
\begin{align*}
|\mathcal{S}|  &\ge \frac{c(q(1-q)+p^\ast(1-2q)^2)}{(p^\ast(1-2q)^2)^2} \\
&\log\frac{q(1-q)+p^\ast(1-2q)^2}{\delta(p^\ast(1-2q)^2)^2},
\end{align*}
with probability at least $1-\delta/2$ for some absolute constant $c$. 
\end{lemma}

\begin{proof}  
Consider two distinct elements $u, v \in \mathcal{S}$. Consider the two possible hypotheses: $H_1$ (both $u$ and $v$ have the same label i.e $X_u=X_v$) and $H_2$ ($u$ and $v$ have different labels i.e. $X_u \neq X_v$). 
We have for any element $z \in \{1, \dots, n\} \setminus \{u,v\}$,
\begin{align*}
\Pr(\mathsf{XOR}(Y_{u,z},Y_{v,z}) = 1 \mid H_1) = 2q(1-q).
\end{align*}     
On the other hand, if $X_u =i$ and $X_v =j$ and $i \ne j$,
\begin{align*}
&\Pr(\mathsf{XOR}(Y_{u,z},Y_{v,z})  = 1 \mid H_2) \\
&= (1-p_i-p_j)\cdot 2q(1-q)+(p_i+p_j)(q^2+(1-q)^2) \\
&=2q(1-q)+(p_i+p_j)(1-2q)^{2}. 
\end{align*}
Therefore, in order to determine which hypothesis is true, we will consider the queries of $u,v$ with all labels $z \in \mathcal{S}\setminus \{u,v\}$. 
In that case, 
\begin{align*}
&\mathbb{E} \left[ \sum_{z \in \mathcal{S}\setminus \{u,v\}}  \mathsf{XOR}(Y_{u,z},Y_{v,z})\mid H_1 \right]  \\
&= 2(|\mathcal{S}|-2)q(1-q); \\
&\mathbb{E} \left[ \sum_{z \in \mathcal{S}\setminus \{u,v\}}  \mathsf{XOR}(Y_{u,z},Y_{v,z})  \mid H_2 \right] \\
&= 2(|\mathcal{S}|-2)q(1-q)+(|\mathcal{S}|-2)(p_i+p_j)(1-2q)^2. \\
\end{align*}
Therefore, if the deviation from the mean of the statistic $\sum_{z \in \mathcal{S}\setminus \{u,v\}}  \mathsf{XOR}(Y_{u,z},Y_{v,z}) $ is at most $(|\mathcal{S}|-2)\cdot \min_{i,j \mid i \neq j}(p_i+p_j)(1-2q)^2/2$, then we can correctly infer which one of $H_1,H_2$ is true. Using Chernoff bound, we can upper bound the probability of deviation by: 
\begin{align*}
&\Pr \Big(\sum_{z \in \mathcal{S}\setminus \{x,y\}}  \mathsf{XOR}(Y_{u,z},Y_{v,z})- \\
& \mathbb{E} \sum_{z \in \mathcal{S}\setminus \{u,v\}}  \mathsf{XOR}(Y_{u,z},Y_{v,z}) \ge \epsilon (|\mathcal{S}|-2)\Big| H_1 \Big) \\
&\le e^{-\frac{(|S|-2)\epsilon^2}{6q(1-q)}}, 
\end{align*}
and
\begin{align*}
&\Pr\Big(\sum_{z \in \mathcal{S}\setminus \{x,y\}}  \mathsf{XOR}(Y_{u,z},Y_{v,z})-  \\
&\mathbb{E} \sum_{z \in \mathcal{S}\setminus \{u,v\}}  \mathsf{XOR}(Y_{u,z},Y_{v,z}) \le \epsilon (|\mathcal{S}|-2)\Big| H_2 \Big) \\
&\le e^{-\frac{(|S|-2)\epsilon^2}{4(q(1-q)+\epsilon)}},
\end{align*}
where $\epsilon=\min_{i,j \mid i \neq j}(p_i+p_j)(1-2q)^2/2$.  In Algorithm \ref{alg:same_cluster}, we infer whether two elements are in same or different clusters at most $|\mathcal{S}|^2$ times. If we set,
$$
|\mathcal{S}|=O\Big(\frac{q(1-q)+\epsilon}{\epsilon^2}\log\frac{q(1-q)+\epsilon}{\delta\epsilon^2}\Big),
$$
then by the union bound, the probability that a single pair of elements are misclassified is at most $\delta/2$.
Plugging in the value of $\epsilon$ we get that the value promised in the lemma
suffices so that no pair is misclassified with probability at least $1-\delta/2$.
\end{proof}

\begin{lemma}{\label{lem:helper1}}
In a subset $\mathcal{S} \subseteq \{1, \dots, n\}$ of randomly selected elements, the number of elements with label $i$, for all $i\in \{0,1, \dots, k-1\}$, is at least  $|\mathcal{S}|p_i/2$ with probability at least $1-k\exp(-|\mathcal{S}|{p^\ast}/8)$.
\end{lemma}
\begin{proof}
The expected size of a cluster with label $j$ is $|\mathcal{S}|p_j$ and using Chernoff bound, the size of the cluster is at least $|\mathcal{S}|p_j/2$ with probability at least $1-\exp(-|\mathcal{S}|p_j/8)$. Taking a union bound over all the $k$ labels, we have the proof of the lemma.
\end{proof}

We are now ready to prove Theorem \ref{thm:45}.

\begin{proof}[Proof of Theorem \ref{thm:45}]
We follow the steps of the algorithm (Algorithm~\ref{alg:same_cluster}).
We first randomly select a subset $|\mathcal{S}|$ of elements satisfying the condition of Lemma~\ref{lem:hel}. 
Therefore, using Lemma~\ref{lem:hel}, Algorithm~\ref{alg:same_cluster} recovers a perfect clustering of $\mathcal{S}$ in to $\mathcal{C}_0, \dots, \mathcal{C}_{k-1}$ with probability at least $1-\delta/2$. Furthermore, using Lemma~\ref{lem:helper1}, we know that $|\mathcal{C}_i| \ge |\mathcal{S}|p_i/2$ for all $i$ with probability at least $1-k\exp(-|\mathcal{S}|{p^\ast}/8) \ge 1 - \frac{1}{p^\ast} \exp(-|\mathcal{S}|{p^\ast}/8) \ge 1 -\frac{\delta}4,$ since $p^\ast \le 1/k$, and by substituting the value of $|\cS|$.

Now, for an element $u \in \{1, \dots, n\} \setminus \mathcal{S}$, consider the sum $\sum_{v \in \cC_i} Y_{u,v}$. Consider the following two cases.

When, $u$ has the same label as the elements in $\cC_i$,
$$
\Pr(\sum_{v \in \cC_i} Y_{u,v} \le |\cC_i|/2) \le \exp(-|\cC_i| \frac{(1-2q)^2}{8(1-q)}),
$$
and, when $u$ does not have the same label as the elements in $\cC_i$,
$$
\Pr(\sum_{v \in \cC_i} Y_{u,v} \ge |\cC_i|/2)  \le \exp(-|\cC_i| \frac{(1-2q)^2}{12q}).
$$

Therefore, given $|\mathcal{C}_i| \ge |\mathcal{S}|p_i/2$ for all $i$, the probability that $u \in \{1, \dots, n\} \setminus \mathcal{S}$ will be included in an  incorrect cluster is at most $2k\exp(-c'|\cS|p^\ast \frac{(1-2q)^2}{\max(q,1-q)}) \le \delta/4$, by substituting  for the value of $|\cS|$ and again noting that $p^\ast \le 1/k.$


Therefor the expected number of elements that are misclassified is at most
\begin{align*}
\small
& n \cdot \Pr(\text{clustering of $\mathcal{S}$ is incorrect}) \\
                                           &+n \cdot\Pr(\text{inference of label is incorrect} \mid \\
                                           & \text{clustering of $\mathcal{S}$ is correct}) 
                                            = n(\delta/2+\delta/4+\delta/4) = n\delta.                                           
\end{align*}
Thus the scheme is $(1-\delta)$-good with
 the total number of queries being used is $O(|\mathcal{S}|^2+n|\mathcal{S}|)$.
\end{proof}

\paragraph{AND Querying Scheme} 
Note that, with `same-cluster' queries, we could only recover the clustering, and not the labeling of the elements. In this subsection, we will recover an approximate labeling of elements. Also, instead of binary output queries, in this part we consider an oracle that can provide one of $k$ different answers. 
We consider a model of noise in the query answer where the oracle provides correct answer with probability $1-q$, and any one of the remaining incorrect answers with probability $\frac{q}{k-1}$ 

We only perform pairwise queries. For a pair of labels $X$ and $X'$ we define a query $Y=Q(X,X') \in  \{0,1,\dots,k-1\}$.  
For our algorithm we define the $Q$ as 
 \[
    Q(X,X')= \left\{\begin{array}{llr}
        i & \textit{if } & X=X' =i \\
        0 &  &\text{otherwise. }\\
        \end{array}\right\} 
  \]  
We can observe that for $k=2$, this query is exactly same as the binary AND query that we defined in the previous section.  
In the general setting, it is equivalent to asking `what is the common label of the two elements?'.
In our querying scheme, we make a total of $\frac{nd}{2}$ queries, for an integer $d>1.$ We design a $d$-regular graph $G(V,E)$ where $V= \{1, \dots, n\}$ is the set of elements that we need to label. We query  all the pairs of elements $(u,v)\in E$. For any $u \in V$ let us define $\cN(u) = \{v\in V: (u,v) \in E\}$ to be the neighborhood of $u$.

Under this querying scheme, we propose to use Algorithm \ref{alg1} for reconstructions of labels.

\begin{algorithm} [t]                   
\caption{Noisy query approximate recovery with $\frac{nd}{2}$ queries}          
\label{alg1}                           
\begin{algorithmic}                    
    \REQUIRE PRIOR $\bfp \equiv (p_0, \dots, p_{k-1})$, {\color{black} $G(V,E), q$}
    \REQUIRE Query Answers $Y_{u,v}: (u,v) \in E$
    \FOR{$i\in[1,2,\dots,k-1]$}
        \STATE $C_{i}=\frac{dq}{k-1}+\frac{dp_{i}}{2}\big(1-\frac{qk}{k-1}\big)$ 
    \ENDFOR
    \FOR{$u \in V$}
       \FOR{$i\in \{1,2,\dots,k-1\}$}
           \STATE $N_{u,i} \equiv \sum_{v \in \cN(u)} \mathds{1}\{Y_{u,v}=i\}$
           \IF{$N_{u,i} \ge \lceil C_{i}\rceil$}
                \STATE $X_{u} \leftarrow i$
                 \STATE Assigned $\leftarrow$ True
                \STATE \textbf{ break}
            \ENDIF        
       \ENDFOR
       \IF{$\neg$ Assigned}
                \STATE $X_{u} \leftarrow 0 $
       \ENDIF                
    \ENDFOR    
\end{algorithmic}
\end{algorithm}

\begin{theorem}
\label{thm:5}
 The querying scheme with  $m = O\Big(\frac{n}{p^\ast(1-\frac{qk}{k-1})}\max\big(\frac{q}{k (1-\frac{qk}{k-1})p^\ast },1\big)\log\frac{k}{\delta}\Big)$ AND queries  as above and Algorithm \ref{alg1} is $(1-\delta)$-good for approximate recovery of labels from noisy queries. In particular, if $p^\ast \sim \frac{1}{k}$, then, for a constant noise probability $0 <q < 1-\frac{1}{k}$, $O(\frac{n}{p^\ast}\log \frac{k}{\delta})$queries suffice  for $(1-\delta)$-good recovery.

\end{theorem}
\begin{proof} 
The total number of queries is $m =\frac{nd}{2}$. 
Now for a particular element $u \in V$,  we look at the values of $d$ noisy oracle answers $\{Y_{u,v}: v \in \cN(u)\}$. 
Consider the two cases.

Case 1: 
the true label of $u$ is $i\in \{1, \dots, k-1\}$: 

We have,
\begin{align*}
\Pr(Y_{u,v} =i) &= p_i(1-q) + (1-p_i)\frac{q}{k-1} \\
&= \frac{q}{k-1} +p_i\big(1-\frac{qk}{k-1}\big).
\end{align*}
Therefore, $\avg(N_{u,i}) = \frac{dq}{k-1}+dp_{i}\big(1-\frac{qk}{k-1}\big)$.
Using Chernoff bound, 
$$
\Pr(N_{u,i} < C_{i}) = \exp\Big(-\frac{d}{2}\cdot\frac{(\frac{p_i}{2}(1-\frac{qk}{k-1}))^2}{\frac{q}{k-1}+p_i(1-\frac{qk}{k-1})}\Big) \le \frac{\delta}{2},
$$
as long as 
$d = c\cdot \frac{\frac{q}{k-1}+p_i(1-\frac{qk}{k-1})}{(\frac{p_i}{2}(1-\frac{qk}{k-1}))^2} \log \frac1\delta$ 
for some constant $c$.

Case 2: the true label of $u$ is not $i$, i.e., from $\{0,1, \dots, i-1, i+1, \dots, k-1\}$:

Then, 
$$
\Pr(Y_{u,v} =i) = \frac{q}{k-1}.
$$
Hence, $\avg(N_{u,i}) = \frac{dq}{k-1}$, and using Chernoff bound,
$$
\Pr(N_{u,i} \ge C_{i}) \le \exp\Big(-\frac{d}{3}\frac{(\frac{p_i}{2}(1-\frac{qk}{k-1}))^2}{\frac{q}{k-1}}\Big) \le \frac{\delta}{2(k-1)},
$$
as long as $d = c' \cdot \frac{\frac{q}{k-1}}{(\frac{p_i}{2}(1-\frac{qk}{k-1}))^2} \log \frac{k}{\delta}$ for some constant $c'$.

Choosing, $d = c'' \cdot \frac{\frac{q}{k-1}+p_i(1-\frac{qk}{k-1})}{(\frac{p_i}{2}(1-\frac{qk}{k-1}))^2} \log \frac{k}\delta$ will ensure that, if the label of an element is  $i \in\{1, \dots, k-1\}$, then the probability that the element is mislabeled is at most $\frac{\delta}{2} + (k-1)\frac{\delta}{2(k-1)} =\delta$ (by the union bound). Therefore, we can choose,
$$
d = O\Big(\frac{\frac{q}{k-1}+p^\ast(1-\frac{qk}{k-1})}{(\frac{p^\ast}{2}(1-\frac{qk}{k-1}))^2} \log \frac{k}\delta\Big).
$$
Note that, if the true label of the element is $0$, even then the probability that it is being mislabeled as something else is at most $\delta$, since we only assign an element label $0$, if it is not assigned any other labels. 


Therefore, it will suffice to choose,
\begin{align*}
d &= O\Big(\frac{\max(\frac{q}{k-1} ,p^\ast(1-\frac{qk}{k-1}))}{(p^\ast(1-\frac{qk}{k-1}))^2} \log \frac{k}\delta\Big)  \\
&= O\Big(\frac{1}{p^\ast(1-\frac{qk}{k-1})}\max\big(\frac{q}{k (1-\frac{qk}{k-1})p^\ast },1\big)\log\frac{k}{\delta}\Big).
\end{align*}

Since the total number of queries is $\frac{nd}{2}$ we have the claim of the theorem.

In addition, it can also be shown that the number of incorrect labels is $\delta n$ with high probability. Let $Z_u$ be the event that element $u$ has been incorrectly labeled. Then $\avg Z_u = \delta$. The total number of incorrectly labeled elements is $Z= \sum_u Z_u$. We have $\avg Z = n \delta$.   Now define $Z_{u} \sim Z_{v}$ if $Z_{u}$ and $Z_{v}$ are dependent. Now 
$\Delta^{*}\equiv\sum_{Z_{u} \sim Z_{v}}\Pr(Z_{u}|Z_{v}) \le d^{2}+d$ because the maximum number of nodes dependent with $Z_{u}$ are the $1$-hop and $2$-hop neighbors of $u$ in the graph $G$.  Now using Corollary 4.3.5 in \cite{alon2004probabilistic},  it is evident that $Z=\avg Z = n\delta$ almost always.

\end{proof}

\begin{remark}
While not surprising, it is worth mentioning that Algorithm~\ref{alg1} will work if instead of the prior probabilities the sizes of the clusters are known.  The ground truth clusters can be adversarial as long as they maintain the relative sizes.
\end{remark}

The theoretical performance guarantee of Algorithm \ref{alg1} 
for $k=2$ is shown in Figures \ref{fig:Varq} and \ref{fig:Vard}.  
We can observe from Figure \ref{fig:Varq}  that for a particular $q$, incorrect labeling  decreases as $p$ becomes higher. We can also observe from  Figure \ref{fig:Vard} that if $q=0.5$ then the incorrect labeling is 50\% because the complete information from the oracle is lost. For other values of $q$, we can see that the incorrect labeling decreases with increasing $d$. 


\paragraph{Comparing `same-cluster' queries and AND queries.}
In this section we presented two algorithms for approximate recovery, one based on `same-cluster' query, the other based on a generalization of AND queries. Each of these has their advantages. However, for $k=2$ in particular, we found experimentally that for the same dataset, the error-rate in crowd-answers of `same cluster' queries is more than that of AND queries\footnote{In this particular dataset, for the purpose clustering a set of movies in to two parts, we ask the questions ``are movies A and B same genre?'' vs ``are both movies A and B action movies''. }. For $k=2$ this does make sense, as a person can just be familiar with one of the pair of elements, and still answer the AND query correctly.  In  addition, AND queries present less ambiguity  to the crowd. These two positives seem to overcome the demand of more expertise of the crowd worker. Furthermore, AND queries lead to a direct labeling of the elements, instead of a clustering.  

For general $k$, from the expressions of query complexities in Theorems~\ref{thm:5} and  \ref{thm:45}, it can be seen that for the regime where the clusters are of proportional sizes, the query complexity of Algorithm \ref{alg:same_cluster} using `same-cluster' queries can be a factor of $k$ larger than that of Algorithm \ref{alg1} using AND queries (query complexities are $O(nk^2\log\frac{k}{\delta})$ and $O(nk\log\frac{k}{\delta})$ respectively). Furthermore, AND queries can handle an error probability up to $1-\frac1k$ whereas `same-cluster' queries work up to an error probability of $\frac12$. On the flip side, this comparison is somewhat unfair, because a `same cluster' query seeks  only 1 bit of information as opposed to $\log k$ bits of information from a generalized AND query. Moreover, AND queries assume the oracle (crowd) to know the labels, whereas only context is enough to answer the `same cluster' queries.

  \begin{figure}[htbp]
  \centering
   \begin{minipage}[t]{0.4\textwidth}
  \includegraphics[scale = 0.4]{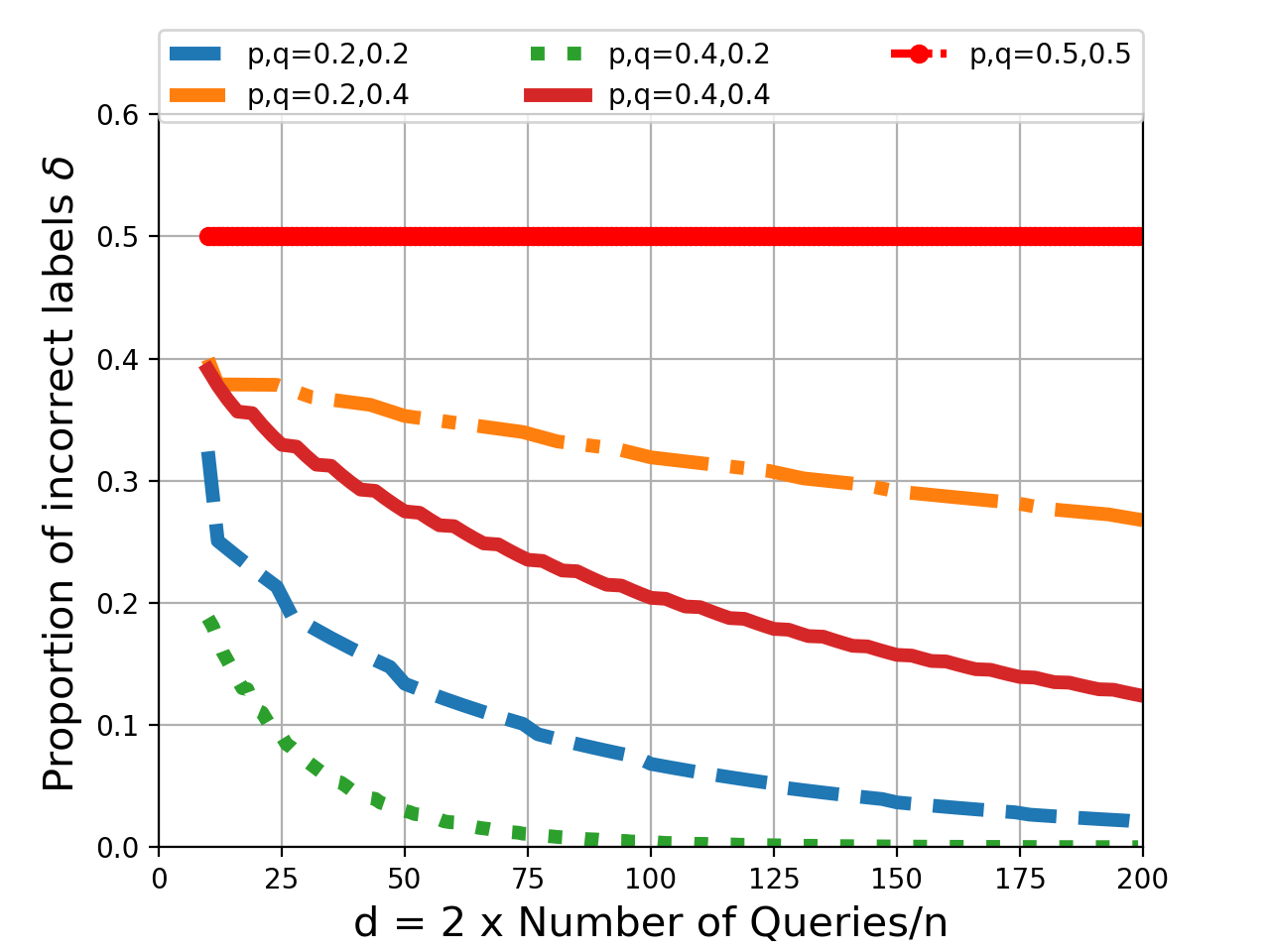}
    \caption{Recovery error for a fixed $p,q$ and varying $d$}
    \label{fig:Vard}
      \end{minipage}
 \hfill
   \begin{minipage}[t]{0.4\textwidth}
    \includegraphics[width=\textwidth]{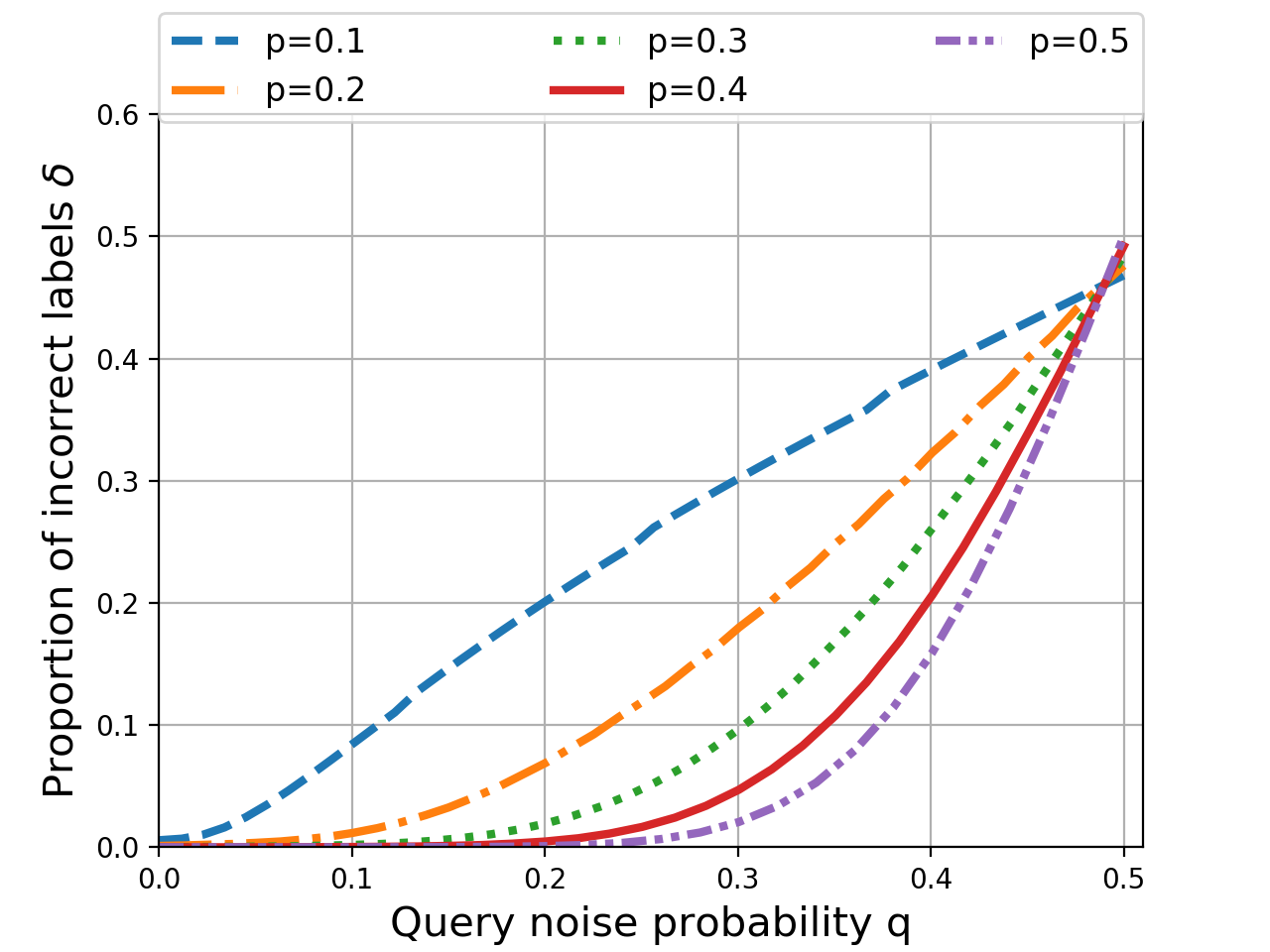}
    \caption{Recovery error for a fixed $p,d=100$ and varying $q$}
    \label{fig:Varq}
  \end{minipage}
  \hfill
   \begin{minipage}[t]{0.4\textwidth}
 \includegraphics[width=\textwidth]{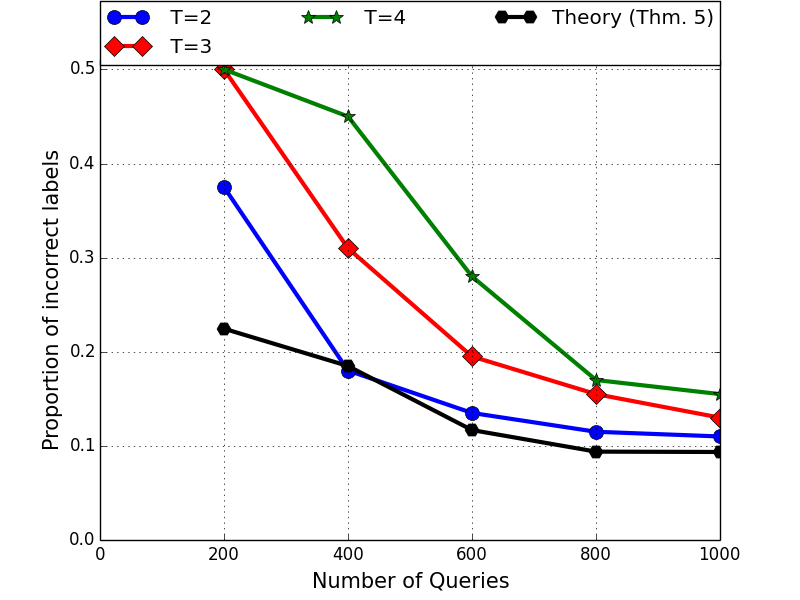}
\caption{Algorithm \ref{alg1} on real crowdsourced dataset  \label{fig:Survey}}  
  \end{minipage}  
\end{figure}

\section{Experiments}
\label{sec:Experiments}
  Though our main contribution is theoretical we have verified our work by using our algorithm on a real dataset created by local crowdsourcing. We first picked a list of 100 `action' movies and 100 `romantic' movies from IMDB \footnote{\texttt{http://www.imdb.com/list/ls076503982/} and \texttt{http://www.imdb.com/list/ls058479560/}}. We then created the queries as described in the querying scheme of Sec.~\ref{subsec:Faulty+Approximate} by defining a $d$-regular graph (where $d$ is even) with the $200$ movies as vertices. To create the graph we put all the movies on a circular list and take a random permutation of the circular list. Then for each node we connected $\frac{d}{2}$ edges on either side to its closest neighbors in the permuted circular list. This random permutation will allow us to use the relative sizes of the clusters as priors as explained in Sec.~\ref{subsec:Faulty+Approximate}. Using $d=10$ , we have $\frac{nd}{2}=1000$ queries with each query being the following question: \textit{Are both the movies   `action' movies?}. Now we divided these 1000 queries into 10 surveys (using SurveyMonkey platform) with each survey carrying 100 queries for the user to answer. We used 10 volunteers to fill up the survey. We instructed them not to check any resources and answer the questions spontaneously and also gave them a time limit of a maximum of 10 minutes. The average finish time of   the surveys were  6 minutes.  The answers represented the noisy query model since some of the answers were wrong.   In total, we have found 105 erroneous answers in those 1000 queries. Now we use Algorithm \ref{alg1} to perform the clustering of movies. For each movie we evaluate the $d$ query answers it is part of,  and use different thresholds $T$ for prediction. That is, for each movie, if there are more than $T$ `yes' answers among those $d$ responses involving that particular movie, we classified the movie as an `action' movie and a `romantic' movie otherwise. 
%
  The  theoretical threshold for predicting an `action' movie is $T= 2$ for oracle error probability $q=0.105,p=0.5$ and $d=10$  but we compared other thresholds as well. 

Now we vary the total number of queries (by tuning $d$) and note the change in prediction accuracy.
We  use Algorithm \ref{alg1} to predict the true label vector from a subset of queries by taking $\tilde{d}$ edges for each node where $\tilde{d}<d$ and $\tilde{d}$ is even i.e $\tilde{d} \in \{2,4,6,8,10\}$. Obviously, for $\tilde{d}=2$ , the thresholds $T=3,4$ are meaningless as we always estimate the movie as `romantic' and hence the distortion starts from $0.5$ in that case. We plotted the error for each case  against the number of queries ($\frac{n\tilde{d}}{2}$) and also plotted the theoretical distortion obtained from our results for $k=2$ labels and $p=0.5,q=0.105$. We compare these results along with the theoretical distortion that we should have for $q=0.105$. All these results have been compiled in Figure \ref{fig:Survey} and we can  observe that the distortion is decreasing with the number of queries and the gap between the theoretical result and the experimental  results is  small for $T=2$. These results  validate our theoretical results and our algorithm to a large extent. 

We have also asked `same cluster' queries with the same set of 1000 pairs to the participants to find that the number of erroneous responses to be  $234$ (whereas with AND queries it was 105). The error-rate for AND queries is less because it is more likely that the participants are familiar with at least one of the movies, in stead of knowing about both. This substantiates the claim that AND queries are easier to answer for workers. Since this number of errors is too high, we have compared the performance of `same cluster' queries (Algorithm \ref{alg:same_cluster}) with AND queries (Algorithm \ref{alg1}) in a synthetically generated dataset via simulation with two hundred elements. For each value of noise parameter $q=0,0.15,0.30$ and $0.45$, we ran Algorithm \ref{alg:same_cluster} (with different values of $|\mathcal{S}|$ in $\{2,4,6,\dots,60\}$) and Algorithm \ref{alg1} (with different values of $d$ in $\{2,4,6,\dots,60\}$) and computed the average proportion of incorrect labels where the average is computed after repeating each instance for $100$ times. Note that Algorithm \ref{alg:same_cluster} produces a clustering (and not a labeling) but we find the number of incorrect labels by computing the number of errors in the best assignment of labels to the clusters. In addition, if the clustering in the first stage of Algorithm \ref{alg:same_cluster} does not produce two connected components, then we consider all the elements to be incorrectly labeled. The results for `same-cluster' queries (Algorithm \ref{alg:same_cluster}) and `AND' queries (Algorithm \ref{alg1}) are shown in Fig \ref{fig:xor} and Fig \ref{fig:and} respectively. As expected from the theoretical guarantees, AND queries (Algorithm \ref{alg1}) outperform `same cluster' queries used as in Algorithm \ref{alg:same_cluster}. Further, for recovery with `same cluster' queries, we have used the popular spectral clustering algorithm (Figure \ref{fig:comp}) with normalized cuts \cite{ng2002spectral}, for which there are no theoretical guarantees in terms of number of queries. The detailed results obtained are plotted in Figure~\ref{fig:comp2} below. Note that, for `low noise' regime the `same cluster' queries using spectral clustering can outperform our algorithm with AND queries in practice, albeit, as mentioned above, we do not have good theoretical guarantees. 

\begin{figure}[htbp]
  \centering
  
  \begin{minipage}[t]{0.47\textwidth}
 \includegraphics[width=\textwidth]{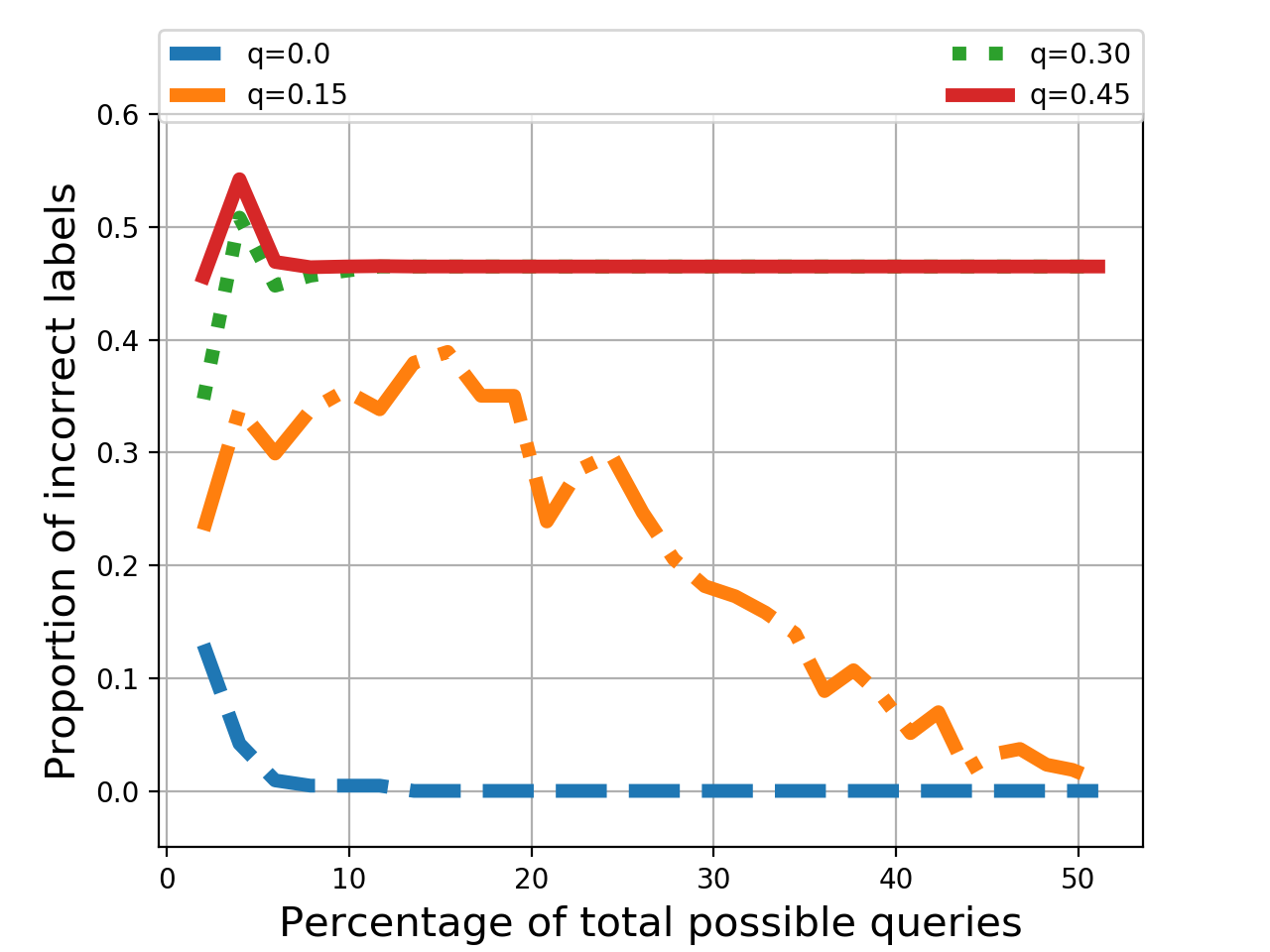}
  \caption{Average proportion of incorrect labels for $200$ elements with $|\mathcal{S}|=2,4,\dots,60$ (each instance repeated for 100 times) plotted using Algorithm \ref{alg:same_cluster}. The $x$-axis label is the percentage of the total possible queries ($19900$) used. \label{fig:xor}}
     \end{minipage}
    \hfill  
    \begin{minipage}[t]{0.47\textwidth}  
     \includegraphics[width=\textwidth]{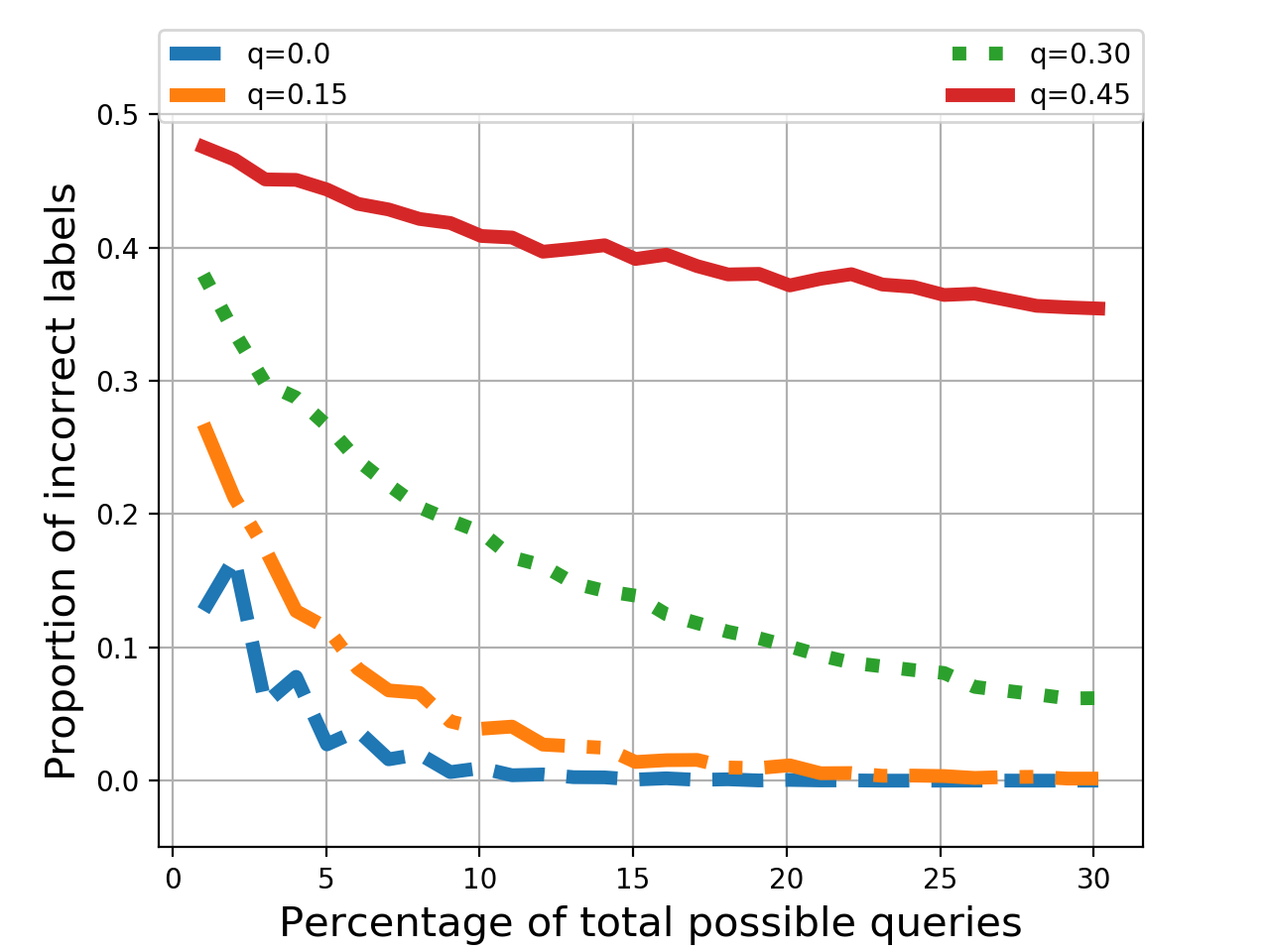}
      \caption{Average proportion of incorrect labels for $200$ elements with $d=2,4,\dots,60$ (each instance repeated for 100 times) plotted using Algorithm \ref{alg1}. The $x$-axis label is the percentage of the total possible queries ($19900$) used. \label{fig:and}}
       \end{minipage}
\end{figure}

\begin{figure}[htbp]
  \centering
  
  \begin{minipage}[t]{0.49\textwidth}
 \includegraphics[width=\textwidth]{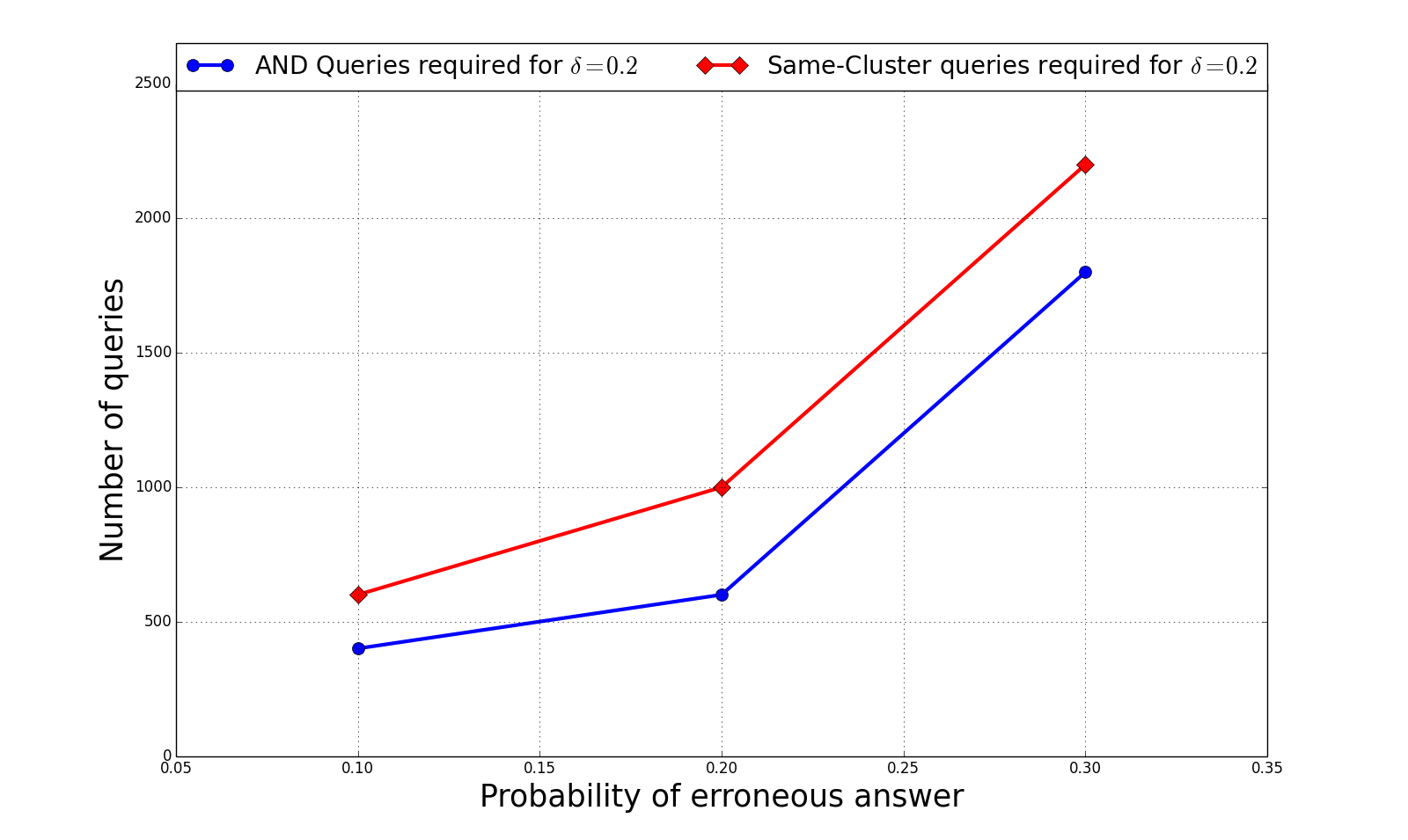}
    \caption{Comparison of `same cluster' query with AND queries when both achieve 80\% accuracy\label{fig:comp}}  \end{minipage}
    \hfill  
    \begin{minipage}[t]{0.49\textwidth}  
     \includegraphics[width=\textwidth]{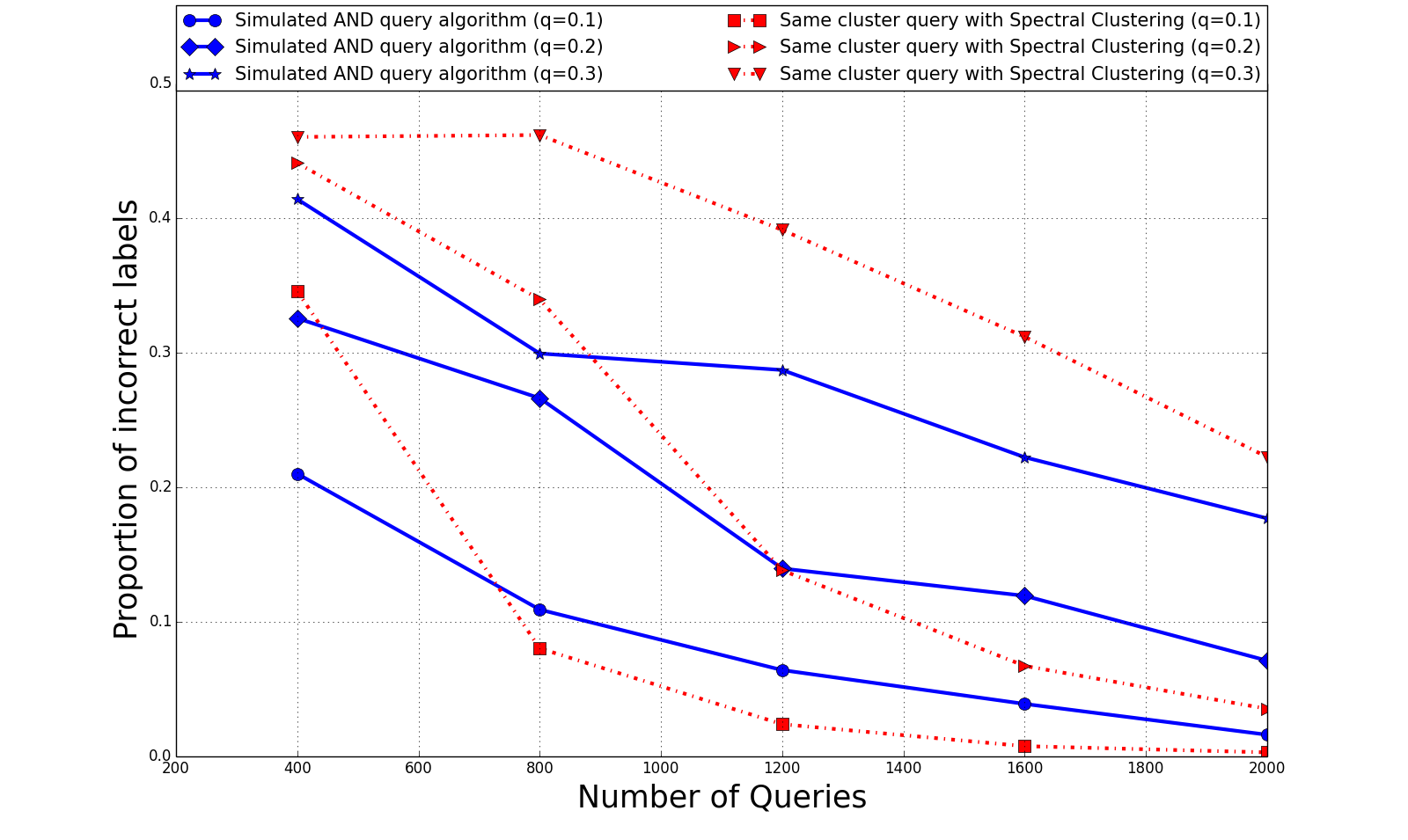}
    \caption{Comparison of performance of `same cluster' query with AND queries on a randomly generated dataset for varying probability of erroneous answers and varying number of queries. The AND querying methods performs well in the high noise regime.\label{fig:comp2}}
    \end{minipage}
\end{figure}

\section{Proof of Theorem \ref{thm:1} and comparison with \cite{miller2001bounds}}\label{sec:thm1}
We have assumed that the label vector $\bfX \in \{0,1\}^n$ consists of i.i.d. Bernoulli random variables with $\Pr(1) =p < \frac12.$
We will define the typical set $\textup{Typ}(p)\equiv \{\mathbf{Z} \in \{0,1\}^{n} \mid np-n^{2/3} \le \textup{wt}(\mathbf{Z}) \le np+n^{2/3}\},$ where $\textup{wt}(\cdot)$ denote the Hamming weight of the argument. We know that (See \cite{cover2012elements}) $\Pr(\mathbf{X} \in \textup{Typ}(p)) \ge 1-\epsilon_n$, with $\epsilon_n \to 0$ as $n \to \infty$. For a fixed vector $\mathbf{X}$ in the typical set, let us  denote $A_{t,\mathbf{X}}$ as the number of vectors in the typical set that are at a distance $t$ from $\mathbf{X}$. Let us denote the weight of the vector $\mathbf{X}$ by $np+s$ where $|s|\le n^{2/3}$. It is easy to see that $A_{t,\mathbf{X}}=\sum_{j: |j| \le 2n^{2/3}}{np+s \choose \frac{t}{2}-j}{n-np-s \choose \frac{t}{2}+j}$ {\color{black}in that case} \footnote{This is for the case when $t$ is even. For odd $t$, we will have $A_{t,\bfX}=\sum_{j}{np+s \choose \frac{t-1}{2}-j}{n-np-s \choose \frac{t+1}{2}+j}+{np+s \choose \frac{t+1}{2}-j}{n-np-s \choose \frac{t-1}{2}+j}$}. Let $A_{t}$ be defined as $\max_{\mathbf{X}\in \textup{Typ}(p)} A_{t,\mathbf{X}}$ which is independent of the vector $\mathbf{X}$ and suppose that the weight of the vector $\mathbf{X}$ which maximizes $A_{t,\mathbf{X}}$ is $np+s^{\ast}$ for some $ s^{\ast}: |s^{\ast}| \le n^{2/3}$. For two binary vectors $\mathbf{X},\mathbf{Y} \in \{0,1\}^{n}$ we will denote $d_{H}(\mathbf{X},\mathbf{Y})$ to be the Hamming distance between the two vectors. Now, we can rewrite the probability of error $P_e$ as below:

\begin{align*}
P_e &= \sum_{\mathbf{X} \in \textup{Typ}(p)} \Pr(\mathbf{X})\Pr_{\mathbf{Q} \sim \mathcal{Q}}(\Psi(\mathbf{Q}\mathbf{X}) \neq \bfX) \\
&+\sum_{\bfX \not \in \textup{Typ}(p)} \Pr(\bfX)\Pr_{\mathbf{Q} \sim \mathcal{Q}}(\Psi(\mathbf{Q}\bfX) \neq \bfX)     \\
 &\le \sum_{\bfX \in \textup{Typ}(p)} \Pr(\bfX)\Pr_{\mathbf{Q} \sim \mathcal{Q}}(\Psi(\mathbf{Q}\bfX) \neq \bfX) \\
& +\sum_{\bfX \not \in \textup{Typ}(p)}\Pr(\bfX)   \\   
&= \sum_{\bfX \in \textup{Typ}(p)} \Pr(\bfX)\Pr_{\mathbf{Q} \sim \mathcal{Q}}(\exists \bfX' \in \textup{Typ}(p) \textup{ such that } \\ &\mathbf{Q}\bfX=\mathbf{Q}\mathbf{X'}) + \epsilon_n   \\   
& \le \epsilon_n+ \sum_{\bfX \in \textup{Typ}(p)} \Pr(\bfX) \times \\
&\sum_{t=1}^{2np+2n^{2/3}} \sum_{\mathbf{X'} \in \textup{Typ}(p) \mid d_{H}(\bfX,\bfX')=t} \Pr_{\mathbf{Q} \sim  \mathcal{Q}}(\mathbf{Q}\bfX=\mathbf{Q}\bfX')  \\
&= \epsilon_n+\sum_{\bfX \in \textup{Typ}(p)} \Pr(\bfX)\times \\
&\sum_{t=1}^{2np+2n^{2/3}} A_{t,\bfX}\Pr_{\mathbf{Q} \sim \mathcal{Q}} (\mathbf{Q}\mathbf{Z}=0 \mid \textup{wt}(\mathbf{Z})=t)  \\
&\le \sum_{t=1}^{2np+2n^{2/3}} A_t\Pr_{\mathbf{Q} \sim \mathcal{Q}} (\mathbf{Q}\mathbf{Z}=0 \mid \textup{wt}(\mathbf{Z})=t) +\epsilon_n \\
&\le \sum_{t \textup{ odd, }t=1}^{2np+2n^{2/3}} A_t\Pr_{\mathbf{Q} \sim \mathcal{Q}} (\mathbf{Q}\mathbf{Z}=0 \mid \textup{wt}(\mathbf{Z})=t) \\
&+\sum_{t \textup{ even, }t=2}^{2np+2n^{2/3}} A_t\Pr_{\mathbf{Q} \sim \mathcal{Q}} (\mathbf{Q}\mathbf{Z}=0 \mid \textup{wt}(\mathbf{Z})=t)+\epsilon_n \\
&\le P_e^{\textup{odd}}+P_e^{\textup{even}}+\epsilon_n,
\end{align*}

where the first two terms in the sum are termed $P_e^{\textup{odd}}$ and $P_e^{\textup{even}}$ respectively.
Now, we will use the following Lemma from \cite{miller2001bounds}
\begin{lemma}
If $tc$ is odd, then 
\begin{align*}
\Pr_{\mathbf{Q} \sim \mathcal{Q}} (\mathbf{Q}\mathbf{Z}=0 \mid \textup{wt}(\mathbf{Z})=t)=0
\end{align*}
otherwise, we will have the following two upper bounds:
\begin{align*}
&\Pr_{\mathbf{Q} \sim \mathcal{Q}} (\mathbf{Q}\mathbf{Z}=0 \mid \textup{wt}(\mathbf{Z})=t) \\
& \le {m \choose \frac{tc}{2}}\Big(\frac{tc}{2m}\Big)^{tc} \quad \textup{ for } t\le \frac{2m}{c}, \\
&\Pr_{\mathbf{Q} \sim \mathcal{Q}} (\mathbf{Q}\mathbf{Z}=0 \mid \textup{wt}(\mathbf{Z})=t) \\
& \le (m\Delta+1)2^{-m}\Big(1+\Big(1-\frac{2t}{n}\Big)^{\Delta}\Big)^{m} .
\end{align*}
\end{lemma}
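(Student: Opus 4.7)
The plan is to handle the three claims of the lemma separately: a parity argument for the first, a direct union-bound counting for the second, and a Fourier/character computation combined with a permutation-to-i.i.d.\ transfer for the third.

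For the first claim, I would observe that in the ensemble $\cQ$ the $t$ coordinates in the support of $\bfZ$ contribute a total of $tc$ ``support-edges'' to the random matching, and the event $\bfQ\bfZ \equiv 0 \pmod{2}$ forces every query vertex to collect an even number of these support-edges. Summing over the $m$ queries, the total $tc$ must therefore be even; hence the event is impossible when $tc$ is odd, giving probability zero.

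For the first upper bound, assuming $tc$ is even and $t \le 2m/c$, I would union-bound over the subset $S \subseteq V_2$ of query vertices that receive at least one support-edge. Each query in $S$ must collect at least two such edges, so $|S| \le tc/2$; the worst case is $|S| = tc/2$, contributing at most $\binom{m}{tc/2}$ choices. Conditioned on $S$, the probability that under a uniform random permutation of the $nc$ edge-slots all $tc$ distinguished left-slots land among the $|S|\Delta$ right-slots attached to $S$ is
\begin{equation*}
\prod_{i=0}^{tc-1}\frac{|S|\Delta - i}{nc - i} \;\le\; \bigl(|S|\Delta/(nc)\bigr)^{tc} \;=\; \bigl(tc/(2m)\bigr)^{tc},
\end{equation*}
since $|S|\Delta \le nc$ implies each factor is at most $|S|\Delta/(nc)$. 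Multiplying by $\binom{m}{tc/2}$ yields the advertised bound.

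For the second upper bound I would first replace the permutation model by an auxiliary independent model in which each of the $m$ rows of $\bfQ$ is drawn i.i.d.\ as a uniform random $\Delta$-subset of $\{1,\dots,n\}$. For a fixed $\bfZ$ of weight $t$, a direct character computation yields
\begin{equation*}
\Pr\bigl(\langle \bfQ_i,\bfZ\rangle \equiv 0 \pmod{2}\bigr) \;=\; \tfrac{1}{2}\bigl(1+(1-2t/n)^{\Delta}\bigr),
\end{equation*}
so by independence the probability of simultaneous orthogonality of all $m$ rows in the i.i.d.\ model is exactly $2^{-m}(1+(1-2t/n)^{\Delta})^m$. The prefactor $m\Delta+1$ then comes from transferring this estimate to the permutation ensemble by bounding the Radon--Nikodym derivative of the permutation law with respect to the i.i.d.\ law, uniformly over edge-patterns on the support of $\bfZ$. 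The main obstacle is precisely this last step: the parity claim and the first counting bound reduce to routine enumeration, whereas controlling the permutation-versus-independent discrepancy by only a linear-in-$m$ factor is the delicate part, and here I would follow the comparison argument given in \cite{miller2001bounds}.
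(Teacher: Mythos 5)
The first thing to note is that the paper does not prove this lemma at all: it is imported verbatim from \cite{miller2001bounds} (``Now, we will use the following Lemma from \cite{miller2001bounds}''), so there is no in-paper argument to compare against and your attempt is a reconstruction of the cited result. Against that standard, your treatment of the first two claims is correct and complete. The parity argument is sound: $(\bfQ\bfZ)_i=0$ forces query $i$ to receive an even number of the $tc$ support-edges, and summing over $i$ shows $tc$ must be even. For the first bound, the union over sets $S$ of query vertices of size exactly $tc/2$ (which is where the hypothesis $t\le 2m/c$ enters, so that such an $S$ exists and any smaller touched set can be enlarged to one), together with the termwise estimate $\prod_{i=0}^{tc-1}\frac{|S|\Delta-i}{nc-i}\le(|S|\Delta/(nc))^{tc}=(tc/(2m))^{tc}$ using $nc=m\Delta$, gives exactly the stated inequality.

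The third claim is where your write-up has a genuine gap: the entire content of that bound is the transfer factor $m\Delta+1$, and you assert it via an unspecified Radon--Nikodym bound and then defer to \cite{miller2001bounds}. Two comments. First, for the exact identity $\Pr(\langle \bfQ_i,\bfZ\rangle\equiv 0)=\tfrac12(1+(1-2t/n)^\Delta)$ you need each row to consist of $\Delta$ coordinates drawn i.i.d.\ uniformly \emph{with replacement}; for a uniform $\Delta$-subset (without replacement) the expression is a hypergeometric sum, not $(1-2t/n)^\Delta$. Second, the comparison step does in fact close, and in one line: the event $\{\bfQ\bfZ=0\}$ depends only on the count vector $(k_1,\dots,k_m)$ of support-edges per query, whose law under the permutation ensemble is $\prod_i\binom{\Delta}{k_i}/\binom{nc}{tc}$ on $\{\sum_i k_i=tc\}$, while under the i.i.d.\ model it is $\prod_i\binom{\Delta}{k_i}(t/n)^{k_i}(1-t/n)^{\Delta-k_i}$. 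The ratio is therefore \emph{constant} on $\{\sum_i k_i = tc\}$, equal to $1/\Pr(\mathrm{Bin}(nc,t/n)=tc)\le nc+1=m\Delta+1$ since $tc$ is the mode of $\mathrm{Bin}(nc,t/n)$. This is equivalent to the generating-function proof in \cite{miller2001bounds} (coefficient extraction of $[((1+x)^\Delta+(1-x)^\Delta)/2]^m$ at the tilt $x=t/(n-t)$, with $m\Delta+1$ coming from $\binom{nc}{tc}\ge 2^{ncH(t/n)}/(nc+1)$), but you should state it rather than leave the decisive inequality as an article of faith.
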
 

Therefore, after substituting the values of $A_t$, we can write:
\begin{align*}
& P_e^{\textup{even}} \le \\
& \sum_{t \textup{ even, }t=2}^{\beta n}{m \choose \frac{tc}{2}}\Big(\frac{tc}{2m}\Big)^{tc} \sum_j {np+s^{\ast} \choose \frac{t}{2}-j}{n-np-s^{\ast} \choose \frac{t}{2}+j}\\
&+ \sum_{t \textup{ even, } t=\beta n}^{2np+2n^{2/3}}\frac{m\Delta+1}{2^{m}}\Big(1+\Big(1-\frac{2t}{n}\Big)^{\Delta}\Big)^{m} \times \\
& \sum_j {np+s^{\ast} \choose \frac{t}{2}+j}{n-np-s^{\ast} \choose \frac{t}{2}-j}
\end{align*}
for some $0 \le \beta \le \frac{2}{\Delta}$ to be determined later. 
Let us define the function $f:\mathbb{Z}\rightarrow \mathbb{R}$ which takes a positive even number as input:
\begin{align*}
f(x)=\sum_{j}{np+s^{\ast} \choose \frac{x}{2}-j}{n-np-s^{\ast} \choose \frac{x}{2}+j}{m \choose \frac{cx}{2}}\Big(\frac{cx}{2m}\Big)^{cx}.
\end{align*}
Notice that \\
\begin{align*}
&\frac{f(x+2)}{f(x)} \\
&=\frac{\sum_{j} {np +s^{\ast}\choose \frac{x}{2}-j+1}{n-np-s^{\ast} \choose \frac{x}{2}+j+1}{m \choose \frac{cx}{2}+c}\Big(\frac{cx+2c}{2m}\Big)^{cx+2c}}{\sum_{j}{np+s^{\ast} \choose \frac{x}{2}-j}{n-np-s^{\ast} \choose \frac{x}{2}+j}{m \choose \frac{cx}{2}}\Big(\frac{cx}{2m}\Big)^{cx}} \\ \nonumber
&\le \frac{{m \choose \frac{cx}{2}+c}\Big(\frac{cx+2c}{2m}\Big)^{cx+2c}}{{m \choose \frac{cx}{2}}\Big(\frac{cx}{2m}\Big)^{cx}} \cdot \max_j \frac{{np+s^{\ast} \choose \frac{x}{2}-j+1}{n-np-s^{\ast} \choose \frac{x}{2}+j+1}}{{np+s^{\ast} \choose \frac{x}{2}-j}{n-np-s^{\ast} \choose \frac{x}{2}+j}} \\ \nonumber
&=\max_j \Big(\frac{np+o(n)}{x/2-j+1}\cdot \frac{n-np-o(n)}{x/2+j+1}\Big) \\
& \times  \frac{\prod_{i=1}^{c} (m-cx/2-i)}{\prod_{i=1}^{c} (cx/2+i)}\cdot \frac{\Big(\frac{cx+2c}{2m}\Big)^{cx+2c}}{\Big(\frac{cx}{2m}\Big)^{cx+2c-2c}} \\ \nonumber
&< \frac{np(n-np)+o(n^2)}{(\frac{x}{2}+1)^{2}}\cdot \Big(\frac{2m}{cx}-1\Big)^{c}  \\
& \cdot\Big(1+\frac{2}{x}\Big)^{cx+2c}\cdot \Big(\frac{cx}{2m}\Big)^{2c} \\ \nonumber
&< \frac{np(n-np)+o(n^2)}{(\frac{x}{2}+1)^{2}} \\
& \cdot \Big(\frac{2m}{cx}-1\Big)^{c} \cdot\Big(1+\frac{2}{x}\Big)^{3cx}\cdot \Big(\frac{cx}{2m}\Big)^{2c} \\ \nonumber
&< \frac{(np(n-np)+o(n^2))e^{3c/2}}{(\frac{x}{2}+1)^{2}} \cdot \Big(\frac{cx}{2m}\Big)^{c} \\ \nonumber
&< \frac{(np(n-np)+o(n^2))e^{3c/2}}{x^2/4} \cdot \Big(\frac{cx}{2m}\Big)^{c} \\ \nonumber
&< \frac{c^2(np(n-np)+o(n^2)e^{3c/2}}{m^2} \cdot \Big(\frac{cx}{2m}\Big)^{c-2}  \\ \nonumber
& < \frac12,
\end{align*}
when $x < \frac{2n}{\Delta}\Big(\frac{1}{2\Delta^2p(1-p)e^{3c/2}}\Big)^{\frac{1}{c-2}}$ (and using the fact $nc = m\Delta$).
Hence, for $\beta=\frac{2}{\Delta}\Big(\frac{1}{2\Delta^2p(1-p)e^{3c/2}}\Big)^{\frac{1}{c-2}}$, the condition $\frac{f(x+2)}{f(x)} <\frac{1}{2}$ is  satisfied for all $x<\beta n$.  In that case we can rewrite $P_e^{\textup{even}}$ as
\begin{align}{\label{eq:new}}
& P_e^{\textup{even}} \le 2f(2)+  \sum_{t \textup{ even, } t=\beta n}^{2np+2n^{2/3}} \frac{m\Delta+1}{2^{m}}\Big(1+\Big(1-\frac{2t}{n}\Big)^{\Delta}\Big)^{m} \\ \nonumber
& \times \sum_j {np+s^{\ast} \choose \frac{t}{2}+j}{n-np-s^{\ast} \choose \frac{t}{2}-j}
\end{align}
where $f(2) \le n^2\Big(p(1-p)+p^{2}+(1-p)^{2}+o(1)\Big){m \choose c}\Big(\frac{c}{m}\Big)^{2c} \le n^{2-c}\Big(p(1-p)+p^{2}+(1-p)^{2}+o(1)\Big)(\Delta c)^{c}$.

Let us denote by $\sigma$ the second term of the right hand side in Eq.~\eqref{eq:new} which we can upper bound as follows:
\begin{align*}
\sigma &=\sum_{t=\beta n:t \textup{ is even}}^{2np+2n^{2/3}} \sum_j {np+s^{\ast} \choose \frac{t}{2}+j}{n-np-s^{\ast} \choose \frac{t}{2}-j} \\
&\times \frac{m\Delta+1}{2^{m}}\Big(1+\Big(1-\frac{2t}{n}\Big)^{\Delta}\Big)^{m} \\
&\le \sum_{t=\beta n:t \textup{ is even}}^{2np+2n^{2/3}} 2^{npH(\frac{t}{2np})+(n-np)H(\frac{t}{2n(1-p)})} \\
&\times \frac{nc+1}{2^{nc/\Delta}}\Big(1+\Big(1-\frac{2t}{n}\Big)^{\Delta}\Big)^{nc/\Delta} \\
\end{align*}
where we have used the fact that $\sum_{j\mid \frac{j}{a}=o(1)} {a \choose \gamma a+j} \le 2^{aH(\gamma)}$. Hence, on taking logarithm on both sides, we will have 
\begin{align*}
\frac{\log \sigma}{n} &\le \max_{\beta \le x \le 2p} pH\Big(\frac{x}{2p}\Big)+(1-p)H\Big(\frac{x}{2(1-p)}\Big) \\
&+\frac{c}{\Delta}\log \Big(1+\Big(1-2x \Big)^{\Delta}\Big)-\frac{c}{\Delta}+\frac{o(n)}{n}
\end{align*}
Therefore, if $\max_{\beta \le x \le 2p} pH\Big(\frac{x}{2p}\Big)+(1-p)H\Big(\frac{x}{2(1-p)}\Big) 
+\frac{c}{\Delta}\log \Big(1+\Big(1-2x \Big)^{\Delta}\Big)<\frac{c}{\Delta}$, then $\sigma$ decreases exponentially with $n$.
In the same way as above, we can also bound $P_{e}^{\textup{odd}}$ as follows:
\begin{align}{\label{eq:new2}}
&P_e^{\textup{odd}} \le 2f(1)+\sum_{t=\beta n:t \textup{odd}}^{2np+2n^{2/3}} \frac{m\Delta+1}{2^{m}}\Big(1+\Big(1-\frac{2t}{n}\Big)^{\Delta}\Big)^{m} \nonumber \\
& \times \sum_j {np+s^{\ast} \choose \frac{t}{2}+j}{n-np-s^{\ast} \choose \frac{t}{2}-j}
\end{align}
Now the first term is small compared to $f(2)$ (i.e. $f(1)=o(f(2))$) and the second term again goes to zero asymptotically with $n$ when the condition $\max_{\beta \le x \le 2p} pH\Big(\frac{x}{2p}\Big)+(1-p)H\Big(\frac{x}{2(1-p)}\Big)+\frac{c}{\Delta}\log \Big(1+\Big(1-2x \Big)^{\Delta}\Big)<\frac{c}{\Delta}$ is satisfied. Substituting $\frac{m}{n}=\frac{c}{\Delta}$ in the above condition, we get back the statement of Theorem \ref{thm:1}.

\noindent Here we state the results of \cite{miller2001bounds} in our notations:
\begin{theorem}[Theorem 3, \cite{miller2001bounds}]\label{thm:miller}
Consider the random ensemble $\mathcal{Q}$ of Query matrices $\mathbf{Q}_{m \times n}$ described above and binary source $\textup{Ber}(p)$ from which $n$ labels are sampled independently. Let $c,\Delta$ be the left degree and the right degree of the ensemble such that $3 \le c < \Delta$ and let $\beta=\frac{2}{\Delta}\exp(-12-\frac{6 \ln \Delta}{c})$. If there exists $0 \le \gamma \le \frac{1}{2}$ such that 
\begin{align*}
\max_{\beta \le x \le \gamma} \frac{x\log \sqrt{2p(1-p)}+H(x)}{1-\log \Big(1+\Big(1-2x \Big)^{\Delta}\Big)}<\frac{m}{n}
\end{align*}
and 
\begin{align*}
\frac{m}{n} > \frac{H(p)}{1-\log \Big(1+\Big(1-2\gamma \Big)^{\Delta}\Big)}
\end{align*}
then the average probability of error $P_e$ goes to zero asymptotically.
\end{theorem}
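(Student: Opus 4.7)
The plan is to follow the probabilistic framework already laid out for Theorem~\ref{thm:1}, but with a tighter-yet-simpler bound on the typical-set distance enumerator $A_t$ and with the range $t \in [1, 2np + 2n^{2/3}]$ split into three regimes instead of two. After removing the probability mass $\epsilon_n$ outside $\textup{Typ}(p)$ and union-bounding exactly as in the proof of Theorem~\ref{thm:1}, the central task reduces to controlling $\sum_t A_t \cdot \Pr_{\mathbf{Q}\sim\mathcal{Q}}(\mathbf{QZ}=0 \mid \textup{wt}(\mathbf{Z})=t)$, with the probability factor supplied in two regimes by the lemma already cited from \cite{miller2001bounds}.

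\textbf{Tighter enumerator bound.} I would replace the sum-of-products estimate of $A_t$ used for Theorem~\ref{thm:1} by the cleaner product-type bound $A_t \le \mathrm{poly}(n) \cdot \binom{n}{t}\,(2p(1-p))^{t/2}$. This is natural because choosing $\mathbf{X}'$ at Hamming distance $t$ from a fixed typical $\mathbf{X}$ amounts to selecting $t$ coordinates to flip, and for each flipped coordinate the pairwise-disagreement weight under the prior is $2p(1-p)$. An elementary Stirling-style expansion of the inner binomial sum confirms that the exponent $pH(t/(2np)) + (1-p)H(t/(2(1-p)))$ appearing in the earlier proof collapses to $H(x) + x\log\sqrt{2p(1-p)}$ with $x = t/n$, which is exactly the numerator in the first hypothesis of the theorem.

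\textbf{Three-regime decomposition.} For $t \le \beta n$, I rerun the geometric-ratio argument $f(t+2)/f(t) < 1/2$ from the proof of Theorem~\ref{thm:1} with the new $A_t$ bound substituted in; because $\binom{n}{t}(2p(1-p))^{t/2}$ telescopes more neatly than the nested binomial sum, the ratio inequality simplifies to the threshold $\beta = (2/\Delta)\exp(-12 - (6\ln\Delta)/c)$ stated in the theorem, and the tail is then dominated by $f(2) = O(n^{2-c})$. For $\beta n < t \le \gamma n$, I combine the product bound on $A_t$ with the large-$t$ bound on $\Pr(\mathbf{QZ}=0)$; the total exponent is proportional to $H(x) + x\log\sqrt{2p(1-p)} + (c/\Delta)\log(1 + (1-2x)^\Delta) - (c/\Delta)$, and the first hypothesis of the theorem is precisely what forces this to be strictly negative on $[\beta, \gamma]$. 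For $\gamma n < t \le 2np + 2n^{2/3}$ the product bound on $A_t$ weakens past usefulness, so I replace it by the crude $A_t \le |\textup{Typ}(p)| \le 2^{nH(p)(1+o(1))}$; the second hypothesis of the theorem is exactly what makes this residual contribution exponentially small.

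\textbf{Main obstacle.} The delicate step is the enumerator bound: summing unconstrained products of binomials risks overcounting $\mathbf{X}'$ lying outside $\textup{Typ}(p)$, and the polynomial correction factor must absorb the typical-set weight constraint uniformly in $t$. I expect a careful Laplace-method expansion of the inner binomial sum (or equivalently a conditional-probability argument against an auxiliary independent Bernoulli vector) resolves this. Once that bound is in hand, the two hypotheses of the theorem emerge automatically by optimizing over the free split point $\gamma \in [0, 1/2]$ used in the three-regime decomposition.
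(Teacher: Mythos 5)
First, note that the paper itself does not prove this statement: it is quoted (in the paper's notation) as Theorem 3 of \cite{miller2001bounds} purely for numerical comparison with Theorem~\ref{thm:1}, so the only in-paper benchmark is the proof of Theorem~\ref{thm:1}, which you correctly take as your template. Your overall architecture is a reasonable reading of where the two hypotheses come from: three weight regimes, with the geometric-ratio argument supplying $\beta$ and a dominant term $O(n^{2-c})$ for $t\le\beta n$, the first hypothesis controlling $\beta n< t\le\gamma n$, and the second hypothesis paired with the crude count $|\textup{Typ}(p)|\le 2^{nH(p)(1+o(1))}$ for $t>\gamma n$.

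However, the step on which everything hinges is wrong. The bound $A_t \le \mathrm{poly}(n)\binom{n}{t}(2p(1-p))^{t/2}$ is false, and the claimed collapse of $pH\big(\tfrac{x}{2p}\big)+(1-p)H\big(\tfrac{x}{2(1-p)}\big)$ to $H(x)+x\log\sqrt{2p(1-p)}$ fails: at $p=\tfrac12$ the typicality constraint is vacuous, so $A_t=\binom{n}{t}= 2^{nH(x)(1+o(1))}$ and the left exponent equals $H(x)$, whereas your right-hand side is $H(x)-x/2$. No Laplace-method refinement can rescue this, because $2^{n(H(x)-x/2)}$ genuinely undercounts the neighbours. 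The term $x\log\sqrt{2p(1-p)}$ in the hypothesis is not a counting exponent at all; it is a Bhattacharyya/Chernoff exponent. It arises from analysing a maximum-likelihood (most-probable-preimage) decoder rather than the typical-set decoder of Theorem~\ref{thm:1}: for a fixed error pattern $\mathbf{Z}$ with $\textup{wt}(\mathbf{Z})=t$, one bounds $\Pr_{\mathbf{X}}\big(\Pr(\mathbf{X}+\mathbf{Z})\ge\Pr(\mathbf{X})\big)$, which is the probability that a $\mathrm{Binomial}(t,p)$ variable exceeds $t/2$ and decays like a constant to the power $t/2$ times $(p(1-p))^{t/2}$; summing over the $\binom{n}{t}$ error patterns produces the numerator of the first hypothesis. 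With the typical-set decoder and the enumerator-times-collision-probability decomposition you inherit from Theorem~\ref{thm:1}, the best exponent you can obtain in the middle regime is exactly Theorem~\ref{thm:1}'s numerator, which (e.g.\ at $p=\tfrac12$) is strictly larger than the one in the statement, so your argument as written cannot establish the theorem. To fix it you must switch decoders and replace your decomposition by the pairwise-error sum $\sum_{\mathbf{Z}:\,\textup{wt}(\mathbf{Z})=t}\Pr_{\mathbf{Q}}(\mathbf{Q}\mathbf{Z}=0)\,\Pr_{\mathbf{X}}\big(\Pr(\mathbf{X}+\mathbf{Z})\ge\Pr(\mathbf{X})\big)$.
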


It is difficult to compare analytically the bound of Thm.~\ref{thm:miller} with our result  (Thm.~\ref{thm:1}). Note that, the number of queries is an increasing function of the prior probability ($p$). We have plotted the number of  queries against the prior probabilities obtained from these two theorems for  $\Delta=7,10$.
It can be seen that in some points, our analysis (blue dots) is tighter than the expression in \cite{miller2001bounds} (red dots), while being same/similar in others.
\begin{figure*}[tb]
  \begin{subfigure}[t]{0.49\textwidth}
     \includegraphics[height=1.9in]{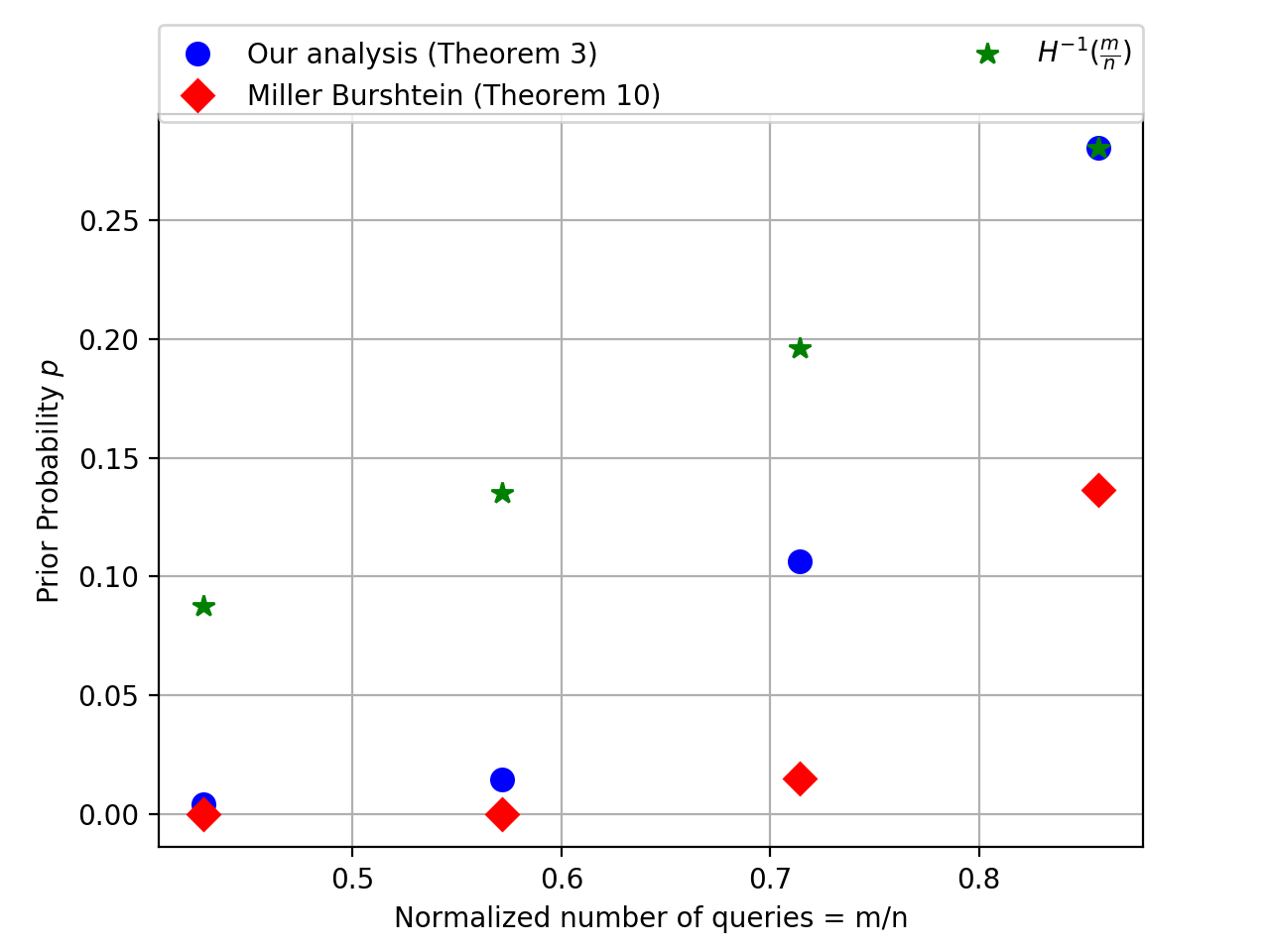}
     \caption{\small Comparison of Theorem \ref{thm:1} and \ref{thm:miller} for $\Delta=7$.}
          ~\label{fig:Delta=7compare.png}
  \end{subfigure}
  \hfill
\begin{subfigure}[t]{0.49 \textwidth}
   \includegraphics[height=1.9in]{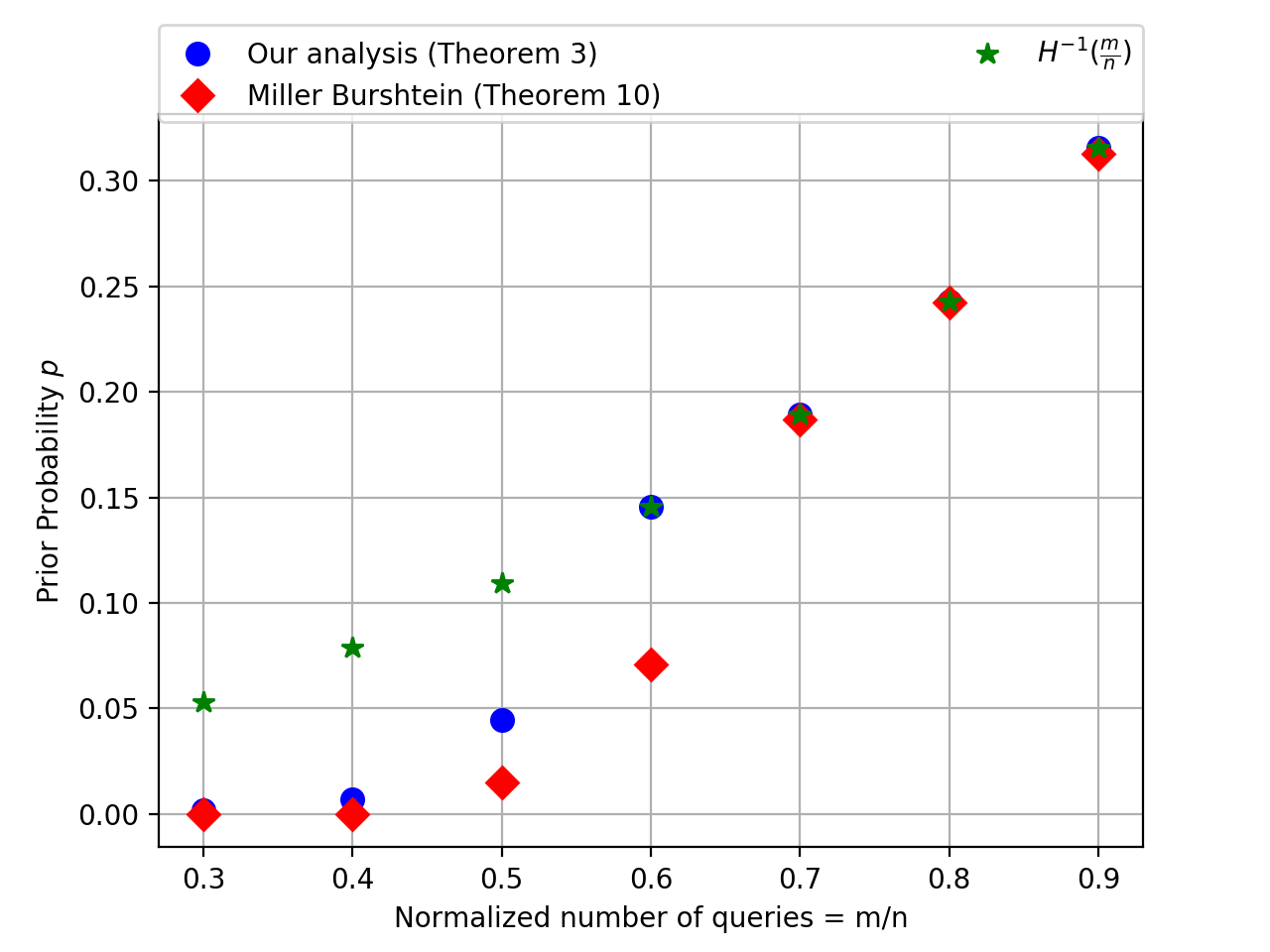}
   \caption{\small Comparison of Theorem \ref{thm:1} and \ref{thm:miller} for $\Delta=10$.}
       ~\label{fig:Delta=10compare.png}
 \end{subfigure}%
\hfill
 \caption{\small Comparison of our analysis and the analysis in \cite{miller2001bounds} for the random ensemble $\mathcal{Q}$}\label{fig:first1}
\end{figure*}

\section{Proof of Theorem \ref{thm:3}}\label{sec:thm3}
We will need the following definitions for this proof inspired by \cite{chandar2010sparse}.
\begin{definition}
If the number of queries is $m$ and the number of input labels is $n$ then we define rate as the relative number of queries or $R=\frac{m}{n}$
\end{definition}
\begin{definition}
The Rate-Distortion function $R(\delta)$ is the infimum of the feasible rates such that the scheme is $(1-\delta)$-good. \end{definition}
\begin{definition}
The Distortion-Rate function $\delta(R)$ is the infimum of all $\delta$, for $(1-\delta)$-good schemes,  when the rate is $R$.
\end{definition}
\begin{definition}
The set of reconstructed label vectors are called {\em codewords}. Since the rate is R, our problem is to define the querying scheme $Q:\{0,1\}^{n} \rightarrow \{0,1\}^{nR}$ and a  recovery $ \{0,1\}^{nR} \rightarrow  \{0,1\}^{n}$.  We have a bijective mapping from query answers $\bfY$ to $\hat{\bfX}$. Hence the total number of possible codewords is $2^{nR}$.
\end{definition}
\begin{proof}[Proof of Theorem \ref{thm:3}]
We are interested in finding a lower bound on the distortion that we will achieve if we use a rate $R(\tilde{\delta})$ for the model where $R(\tilde{\delta})$ is the minimum rate for distortion $\tilde{\delta}$ achieved optimally in the unconstrained case. Now suppose that the distortion achieved in the model at rate $R(\tilde{\delta})$ is $\delta=\tilde{\delta}+\epsilon$ and hence we want a lower bound on $\epsilon$  . Since our input labels are typical sequences and it is mapped to a unique codeword,  the reconstructed sequence must be having a per symbol distortion of less than $\tilde{\delta}+\epsilon$ . We will be counting the number of label vector-codeword pairs $(S,T)$ where $S \in \{0,1\}^{n}$ is a label vector and $T \in \{0,1\}^{n}$ is the corresponding codeword for $S$. Let us allow a small extra distortion of $\gamma$. Now, we have 2 ways to count the number of possible pairs. Firstly, from the perspective of the codewords , the number of possible pairs will be $2^{nR}{\rm Vol}(\tilde{\delta}+\epsilon+\gamma)$ where 
${\rm Vol}(\tilde{\delta}+\epsilon+\gamma)$ is the number of label sequences present in the ball of radius $n(\tilde{\delta}+\epsilon+\gamma)$ from a particular codeword. Since there might be repetitions hence we are overcounting and hence this value is definitely an upper bound on the number of pairs. Again we can try to see from the perspective of the label sequences. Now let us say that we have a label sequence $S$ and a corresponding compressed sequence $C$ and the codeword $T$ when no extra distortion is allowed. We will be trying to find out the number of other different codewords $S$ could have mapped to when this extra distortion is allowed. Let us take ball of $n\gamma$ around $S$ and take another label sequence in that ball and call it $\hat{S}$. Let the codeword it was initially mapped to be $\hat{T}$ Now, 
\begin{align*}
|\hat{T}-S| \le |\hat{T}-\hat{S}|+|\hat{S}-S| \le n(\tilde{\delta}+\epsilon+\gamma)
\end{align*}
 Hence $\hat{T}$ is a possible candidate codeword for $S$ if we allow this extra bit of more distortion $\gamma$. Hence we want a lower bound on the number of possible different codewords that can be candidate codewords for $S$ when this extra distortion is allowed. Now we know that there is a bijective mapping from the compressed sequences to the codewords. Let $x$ denote the fraction of bits in $C$ that we can perturb. Then if,
$nRx \Delta <n\gamma$ or 
$x<\frac{\gamma}{R\Delta}$ , then there exists label vectors mapped to those compressed sequences which will be within a ball of $n\gamma$ from $S$ and all the codewords corresponding to those perturbed compressed sequences must be different because of the bijective mapping. Since the total number of label sequences is $2^{nH(p)}$, the total number of pairs that can be calculated in such a way will be $2^{nH(p)+nRH(\tfrac{\gamma}{R\Delta})}$ which is a lower bound on the actual number of pairs. Since $\frac{1}{n}\log (Vol(\tilde{\delta}+\epsilon+\gamma))=H(p)-R(\tilde{\delta}+\epsilon+\gamma)$ we have 
\begin{align*}
R(\tilde{\delta})-R(\tilde{\delta}+\epsilon+\gamma) \geqslant RH(\frac{\gamma}{R\Delta})
\end{align*}
Using the fact that $R(\delta)=H(p)-H(\delta)$, we have 
$$H(\tilde{\delta}+\epsilon+\gamma)-H(\tilde{\delta}) \ge RH(\frac{\gamma}{R\Delta}).$$ Now since entropy is a concave function of the distribution we must have $H(\tilde{\delta}+\epsilon+\gamma) \ge H(\tilde{\delta})+(\epsilon+\gamma)h'(\tilde{\delta})$ where $h'(x)=\log \frac{ (1-x)}{x}$ is the derivative of the binary entropy function. Plugging it into the formula, we have
$$\epsilon h'(\tilde{\delta})  \ge RH(\frac{\gamma}{R \Delta})-\gamma h'(\tilde{\delta})$$ 
Now we want the value of $\gamma$ in order to get the tightest lower bound. Hence, differentiating w.r.t $\gamma$ and setting it to $0$ in order to maximize it, we will have $\gamma=\frac{R\Delta}{1+e^{\Delta h'(\tilde{\delta})}}$
Using this value of $\gamma$, we plug in the original equation and we get 
$$\epsilon \ge -\frac{R\Delta}{1+e^{\Delta h'(\tilde{\delta})}}+\frac{1}{h'(\tilde{\delta})}RH(\frac{1}{1+e^{\Delta h'(\tilde{\delta})}})$$
Expanding the entropy function we have
\begin{align*}
\epsilon &\ge -\frac{R\Delta}{1+e^{\Delta h'(\tilde{\delta})}}+\frac{R}{h'(\tilde{\delta})(1+e^{\Delta h'(\tilde{\delta})})}\log (1+e^{\Delta h'(\tilde{\delta})}) \\
&+\frac{Re^{\Delta h'(\tilde{\delta})}}{h'(\tilde{\delta})(1+e^{\Delta h'(\tilde{\delta})})}\log (1+\frac{1}{e^{\Delta h'(\tilde{\delta})}})
\end{align*}
 Now if use the facts that $\log (1+e^{\Delta h'(\tilde{\delta})}) \ge \Delta h'(\tilde{\delta})$, $\log (1+\frac{1}{e^{\Delta h'(\tilde{\delta})}}) \ge \frac{1}{e^{\Delta h'(\tilde{\delta})}}$ and $R(\tilde{\delta})=H(p)-H(\tilde{\delta})$, we get that 
 $$\epsilon \ge \frac{H(p)-H(\tilde{\delta})}{h'(\tilde{\delta})(1+e^{\Delta h'(\tilde{\delta})})}.$$
\end{proof}


\ifCLASSOPTIONcaptionsoff
  \newpage
\fi




\begin{thebibliography}{10}

\bibitem{abh:16}
E.~Abbe, A.~S. Bandeira, and G.~Hall.
\newblock Exact recovery in the stochastic block model.
\newblock {\em {IEEE} Trans. Information Theory}, 62(1):471--487, 2016.

\bibitem{alon2004probabilistic}
N.~Alon and J.~H. Spencer.
\newblock {\em The probabilistic method}.
\newblock John Wiley \& Sons, 2004.

\bibitem{ashtiani2016clustering}
H.~Ashtiani, S.~Kushagra, and S.~Ben-David.
\newblock Clustering with same-cluster queries.
\newblock In {\em Advances In Neural Information Processing Systems}, pages
  3216--3224, 2016.

\bibitem{buhrman2002bitvectors}
H.~Buhrman, P.~B. Miltersen, J.~Radhakrishnan, and S.~Venkatesh.
\newblock Are bitvectors optimal?
\newblock {\em SIAM Journal on Computing}, 31(6):1723--1744, 2002.

\bibitem{chandar2010sparse}
V.~B. Chandar.
\newblock {\em Sparse graph codes for compression, sensing, and secrecy}.
\newblock PhD thesis, Massachusetts Institute of Technology, 2010.

\bibitem{cover2012elements}
T.~M. Cover and J.~A. Thomas.
\newblock {\em Elements of information theory}.
\newblock John Wiley \& Sons, 2012.

\bibitem{fss:16}
D.~Firmani, B.~Saha, and D.~Srivastava.
\newblock Online entity resolution using an oracle.
\newblock {\em {PVLDB}}, 9(5):384--395, 2016.

\bibitem{gallager1962low}
R.~Gallager.
\newblock Low-density parity-check codes.
\newblock {\em IRE Transactions on information theory}, 8(1):21--28, 1962.

\bibitem{DBLP:journals/corr/GruenheidNKGK15}
A.~Gruenheid, B.~Nushi, T.~Kraska, W.~Gatterbauer, and D.~Kossmann.
\newblock Fault-tolerant entity resolution with the crowd.
\newblock {\em CoRR}, abs/1512.00537, 2015.

\bibitem{gruenheid2015fault}
A.~Gruenheid, B.~Nushi, T.~Kraska, W.~Gatterbauer, and D.~Kossmann.
\newblock Fault-tolerant entity resolution with the crowd.
\newblock {\em arXiv preprint arXiv:1512.00537}, 2015.

\bibitem{karger2011iterative}
D.~R. Karger, S.~Oh, and D.~Shah.
\newblock Iterative learning for reliable crowdsourcing systems.
\newblock In {\em Advances in neural information processing systems}, pages
  1953--1961, 2011.

\bibitem{karger2014budget}
D.~R. Karger, S.~Oh, and D.~Shah.
\newblock Budget-optimal task allocation for reliable crowdsourcing systems.
\newblock {\em Operations Research}, 62(1):1--24, 2014.

\bibitem{lahouti2016fundamental}
F.~Lahouti and B.~Hassibi.
\newblock Fundamental limits of budget-fidelity trade-off in label
  crowdsourcing.
\newblock In {\em Advances in Neural Information Processing Systems}, pages
  5059--5067, 2016.

\bibitem{liu2012variational}
Q.~Liu, J.~Peng, and A.~T. Ihler.
\newblock Variational inference for crowdsourcing.
\newblock In {\em Advances in neural information processing systems}, pages
  692--700, 2012.

\bibitem{makhdoumi2015locally}
A.~Makhdoumi, S.-L. Huang, M.~M{\'e}dard, and Y.~Polyanskiy.
\newblock On locally decodable source coding.
\newblock In {\em Communications (ICC), 2015 IEEE International Conference on},
  pages 4394--4399. IEEE, 2015.

\bibitem{massey1977joint}
J.~L. Massey.
\newblock Joint source and channel coding.
\newblock Technical report, DTIC Document, 1977.

\bibitem{mazumdar2014update}
A.~Mazumdar, V.~Chandar, and G.~W. Wornell.
\newblock Update-efficiency and local repairability limits for capacity
  approaching codes.
\newblock {\em IEEE Journal on Selected Areas in Communications},
  32(5):976--988, 2014.

\bibitem{mazumdar2015local}
A.~Mazumdar, V.~Chandar, and G.~W. Wornell.
\newblock Local recovery in data compression for general sources.
\newblock In {\em Information Theory (ISIT), 2015 IEEE International Symposium
  on}, pages 2984--2988. IEEE, 2015.

\bibitem{mazumdar2017semisupervised}
A.~Mazumdar and S.~Pal.
\newblock Semisupervised clustering, and-queries and locally encodable source coding
\newblock In {\em Advances in Neural Information Processing Systems}
  pages 6489--6499, 2017.


\bibitem{aaai:17}
A.~{Mazumdar} and B.~{Saha}.
\newblock {A theoretical analysis of first heuristics of crowdsourced entity
  resolution}.
\newblock {\em The Thirty-First AAAI Conference on Artificial Intelligence
  (AAAI-17)}, 2017.

\bibitem{mazumdar2017clustering}
A.~Mazumdar and B.~Saha.
\newblock Clustering with noisy queries.
\newblock In {\em Advances in Neural Information Processing Systems (NIPS) 31},
  2017.

\bibitem{mazumdar2017query}
A.~Mazumdar and B.~Saha.
\newblock Query complexity of clustering with side information.
\newblock In {\em Advances in Neural Information Processing Systems (NIPS) 31},
  2017.

\bibitem{miller2001bounds}
G.~Miller and D.~Burshtein.
\newblock Bounds on the maximum-likelihood decoding error probability of
  low-density parity-check codes.
\newblock {\em IEEE Transactions on Information Theory}, 47(7):2696--2710,
  2001.

\bibitem{montanari2008smooth}
A.~Montanari and E.~Mossel.
\newblock Smooth compression, gallager bound and nonlinear sparse-graph codes.
\newblock In {\em Information Theory, 2008. ISIT 2008. IEEE International
  Symposium on}, pages 2474--2478. IEEE, 2008.

\bibitem{ng2002spectral}
A.~Y. Ng, M.~I. Jordan, and Y.~Weiss.
\newblock On spectral clustering: Analysis and an algorithm.
\newblock In {\em Advances in neural information processing systems}, pages
  849--856, 2002.

\bibitem{pananjady2015compressing}
A.~Pananjady and T.~A. Courtade.
\newblock Compressing sparse sequences under local decodability constraints.
\newblock In {\em Information Theory (ISIT), 2015 IEEE International Symposium
  on}, pages 2979--2983. IEEE, 2015.

\bibitem{pang2019coding}
J.~Pang, H.~Mahdavifar, and S.~Pradhan.
\newblock Coding for crowdsourced classification with XOR queries.
\newblock In {\em 2019 IEEE Information Theory Workshop (ITW)},
  pages 1--5, IEEE 2019.


\bibitem{patrascu2008succincter}
M.~Patrascu.
\newblock Succincter.
\newblock In {\em Foundations of Computer Science, 2008. FOCS'08. IEEE 49th
  Annual IEEE Symposium on}, pages 305--313. IEEE, 2008.

\bibitem{prelec2017solution}
D.~Prelec, H.~S. Seung, and J.~McCoy.
\newblock A solution to the single-question crowd wisdom problem.
\newblock {\em Nature}, 541(7638):532--535, 2017.

\bibitem{vempaty2014reliable}
A.~Vempaty, L.~R. Varshney, and P.~K. Varshney.
\newblock Reliable crowdsourcing for multi-class labeling using coding theory.
\newblock {\em IEEE Journal of Selected Topics in Signal Processing},
  8(4):667--679, 2014.

\bibitem{DBLP:conf/icde/VerroiosG15}
V.~Verroios and H.~Garcia{-}Molina.
\newblock Entity resolution with crowd errors.
\newblock In {\em 31st {IEEE} International Conference on Data Engineering,
  {ICDE} 2015, Seoul, South Korea, April 13-17, 2015}, pages 219--230, 2015.

\bibitem{vesdapunt2014crowdsourcing}
N.~Vesdapunt, K.~Bellare, and N.~Dalvi.
\newblock Crowdsourcing algorithms for entity resolution.
\newblock {\em PVLDB}, 7(12):1071--1082, 2014.

\bibitem{vinayak2016crowdsourced}
R.~K. Vinayak and B.~Hassibi.
\newblock Crowdsourced clustering: Querying edges vs triangles.
\newblock In {\em Advances in Neural Information Processing Systems}, pages
  1316--1324, 2016.

\bibitem{viola2012bit}
E.~Viola.
\newblock Bit-probe lower bounds for succinct data structures.
\newblock {\em SIAM Journal on Computing}, 41(6):1593--1604, 2012.

\bibitem{wang2012crowder}
J.~Wang, T.~Kraska, M.~J. Franklin, and J.~Feng.
\newblock Crowder: Crowdsourcing entity resolution.
\newblock {\em PVLDB}, 5(11):1483--1494, 2012.

\bibitem{zhou2012learning}
D.~Zhou, S.~Basu, Y.~Mao, and J.~C. Platt.
\newblock Learning from the wisdom of crowds by minimax entropy.
\newblock In {\em Advances in Neural Information Processing Systems}, pages
  2195--2203, 2012.

\end{thebibliography}
%

%


\begin{IEEEbiographynophoto}{Arya Mazumdar} (S'05-M'13-SM'16)
 is an associate professor in the College of Information and Computer Sciences at the University of Massachusetts Amherst. Prior to this, he was an assistant professor at University of Minnesota-Twin Cities (2013-2015), and  a postdoctoral scholar at Massachusetts Institute of Technology (2011-2012).  Arya obtained the Ph.D.~degree  from  University of Maryland, College Park, where his thesis won a Distinguished Dissertation Award (2011). Arya is a recipient of multiple other awards, including the NSF CAREER award (2015), an EURASIP JSAP Best Paper Award (2020),   and  the  IEEE ISIT Jack K.~Wolf Student Paper Award (2010).
  He is currently serving as an Associate Editor for the IEEE Transactions on Information Theory and as an Area editor for Now Publishers Foundation and Trends in Communication and Information Theory series. Arya's research interests include coding theory (error-correcting codes and related combinatorics), information theory and foundations of machine learning.
 \end{IEEEbiographynophoto}


\begin{IEEEbiographynophoto}{Soumyabrata Pal}
is a PhD student in College of Information and Computer Sciences at the University of Massachusetts Amherst, advised by Professor Arya Mazumdar. He is interested in theoretical machine learning, applied statistics and information theory. Before coming to Amherst, he obtained his B.Tech from Indian Institute of Technology Kharagpur in India in 2016.
\end{IEEEbiographynophoto}




\end{document}